\newcommand{\cmark}{\ding{51}}%
\newcommand{\xmark}{\ding{55}}%
\definecolor{LightGray}{gray}{0.9}
\theoremstyle{plain}
\newtheorem{theorem}{Theorem}[section]
\newtheorem{proposition}[theorem]{Proposition}
\newtheorem{lemma}[theorem]{Lemma}
\theoremstyle{definition}
\newtheorem{definition}[theorem]{Definition}
\newtheorem{property}[theorem]{Property}
\theoremstyle{remark}
\newtheorem{remark}[theorem]{Remark}
\DeclareMathOperator*{\argmax}{arg\,max}
\newacronym{IID}{IID}{Independent-identically-distributed}
\newacronym{OOD}{OOD}{Out-of-Distribution}
\newacronym{DNN}{DNN}{Deep Neural Network}
\newacronym{ERM}{ERM}{Empirical Risk Minimization}
\newacronym{MLE}{MLE}{Maximum Likelihood Estimation}
\newacronym{BNN}{BNN}{Bayesian Neural Network}
\newacronym{SNGP}{SNGP}{Spectral-normalized Neural Gaussian Process}
\newacronym{MC}{MC}{Monte-Carlo}
\newacronym{MFVI}{MFVI}{Mean-Field Variational Inference}
\newacronym{NLL}{NLL}{Negative log-likelihood}
\newacronym{ECE}{ECE}{Expected Calibration Error}
\newacronym{SOTA}{SOTA}{State-of-the-art}
\newacronym{MIMO}{MIMO}{Multi-input Multi-output}
\newacronym{DUQ}{DUQ}{Deterministic Uncertainty Quantification}
\newacronym{DDU}{DDU}{Deep Deterministic Uncertainty}
\newacronym{DUE}{DUE}{Deterministic Uncertainty Estimation}
\newacronym{NatPN}{NatPN}{Natural Posterior Network}
\newacronym{AUPR}{AUPR}{Area Under the Precision-Recall}
\newacronym{AUROC}{AUROC}{Area Under the Receiver Operating Characteristic}
\icmltitlerunning{Density-Softmax: Efficient Test-time Model for Uncertainty Estimation and Robustness under Distribution Shifts}
\begin{document}
\twocolumn[
\icmltitle{Density-Softmax: Efficient Test-time Model for Uncertainty Estimation and Robustness under Distribution Shifts
}
\begin{icmlauthorlist}
\icmlauthor{Ha Manh Bui}{jhu}
\icmlauthor{Anqi Liu}{jhu}
\end{icmlauthorlist}
\icmlaffiliation{jhu}{Department of Computer Science, Johns Hopkins University, Baltimore, MD, U.S.A}
\icmlcorrespondingauthor{Ha Manh Bui}{hb.buimanhha@gmail.com}
\icmlkeywords{Machine Learning, ICML}
\vskip 0.3in
]
\printAffiliationsAndNotice{}

\begin{abstract}
Sampling-based methods, e.g., Deep Ensembles and Bayesian Neural Nets have become promising approaches to improve the quality of uncertainty estimation and robust generalization. However, they suffer from a large model size and high latency at test-time, which limits the scalability needed for low-resource devices and real-time applications. To resolve these computational issues, we propose Density-Softmax, a sampling-free deterministic framework via combining a density function built on a Lipschitz-constrained feature extractor with the softmax layer. Theoretically, we show that our model is the solution of minimax uncertainty risk and is distance-aware on feature space, thus reducing the over-confidence of the standard softmax under distribution shifts. Empirically, our method enjoys competitive results with state-of-the-art techniques in terms of uncertainty and robustness, while having a lower number of model parameters and a lower latency at test-time. 
\end{abstract}
\section{Introduction}
The ability of models to produce high-quality uncertainty estimation and robustness is crucial for reliable \acrfull{DNN} in high-stake applications (e.g., healthcare, finance, decision-making, etc.). In principle, a \textbf{reliable model} permits graceful failure, signaling when it is likely to be wrong (\textbf{uncertainty}), and also generalizes better under distribution shifts (\textbf{robustness})~\citep{tran2022plex}. Additionally, to be widely used in real-world scenarios, the reliable \acrshort{DNN} model also necessarily needs to be fast and lightweight (\textbf{efficiency}). This efficiency criterion can be considered in two phases, training-time and test-time. At \textbf{training-time}, an inefficient \acrshort{DNN} model might be acceptable given the high computational resources in development. However, at \textbf{test-time}, inefficiency is a critical issue for users when the model needs to be deployed on low-resource devices and in real-time applications~\citep{hinton2015distilling}. 

\begin{table}[t!]
    \centering
    \scalebox{0.76}{
    \begin{tabular}{ccccc}\\
    \toprule  
    Method & $\begin{matrix}
    \text{Uncertainty} \\
    \text{quality}
    \end{matrix}$ & $\begin{matrix}
    \text{Robustness} \\
    \text{quality}
    \end{matrix}$ & $\begin{matrix}
    \text{Test-time} \\
    \text{efficiency}
    \end{matrix}$ & $\begin{matrix}
    \text{Without prior} \\
    \text{requirement}
    \end{matrix}$\\\midrule
    Deterministic & \xmark & \xmark & \cmark & \cmark\\
    Bayesian  & \cmark & \xmark & \cmark & \xmark\\
    Ensembles & \cmark & \cmark & \xmark & \cmark\\
    Ours & \cmark & \cmark & \cmark & \cmark\\
    \bottomrule
    \end{tabular}}
    \caption{A comparison between methods regarding uncertainty, robustness quality, test-time efficiency (lightweight \& fast), and whether pre-defined prior hyper-parameters are required.}
    \vspace{-0.1in}
    \label{tab:teaser}
\end{table}

Deterministic~\acrfull{ERM}~\citep{vapnik1998erm} model nowadays can be efficient due to being \textbf{sampling-free} with $\mathcal{O}(1)$ sample complexity, i.e., it only needs a single forward pass with a single DNN model to produce the softmax probability. However, it often struggles with over-confidence and over-fitting. This poor performance usually occurs when the test data is far and does not come from the same distribution as the training set~\citep{minderer2021revisiting,bui2021exploiting,bui2022benchmark,ovadia2019can,guo2017on}.

To improve uncertainty estimation and robustness under distribution shifts, recent \textbf{sampling-based} approaches with $\mathcal{O}(M)$ sample complexity, where $M$ is the number of sampling times, have shown promising results~\citep{collier2021correlated,dusenberry2020rank1,Wen2020BatchEnsemble:}. Among these works, the best performance in practice so far is based on Deep Ensembles~\citep{lakshminarayanan2017ensemble,tran2022plex, nado2021uncertainty}. However, this approach suffers from a heavy computational burden as it requires more model parameters and multiple forward passes, leading to inefficiency at test-time. To tackle this challenge, sampling-free methods~\citep{mukhoti2022deep,charpentier2022natural,Liu2020SNGP,havasi2021training}, and lightweight sampling-based models~\citep{Wen2020BatchEnsemble:,dusenberry2020rank1} have been recently proposed. Nevertheless, besides generally performing worse than Deep Ensembles, these methods are also less computationally efficient than Deterministic~\acrshort{ERM}~\citep{nado2021uncertainty} (e.g., Tab.~\ref{tab:teaser}).

\begin{figure*}[t!]
\begin{center}
  \includegraphics[width=1.0\linewidth]{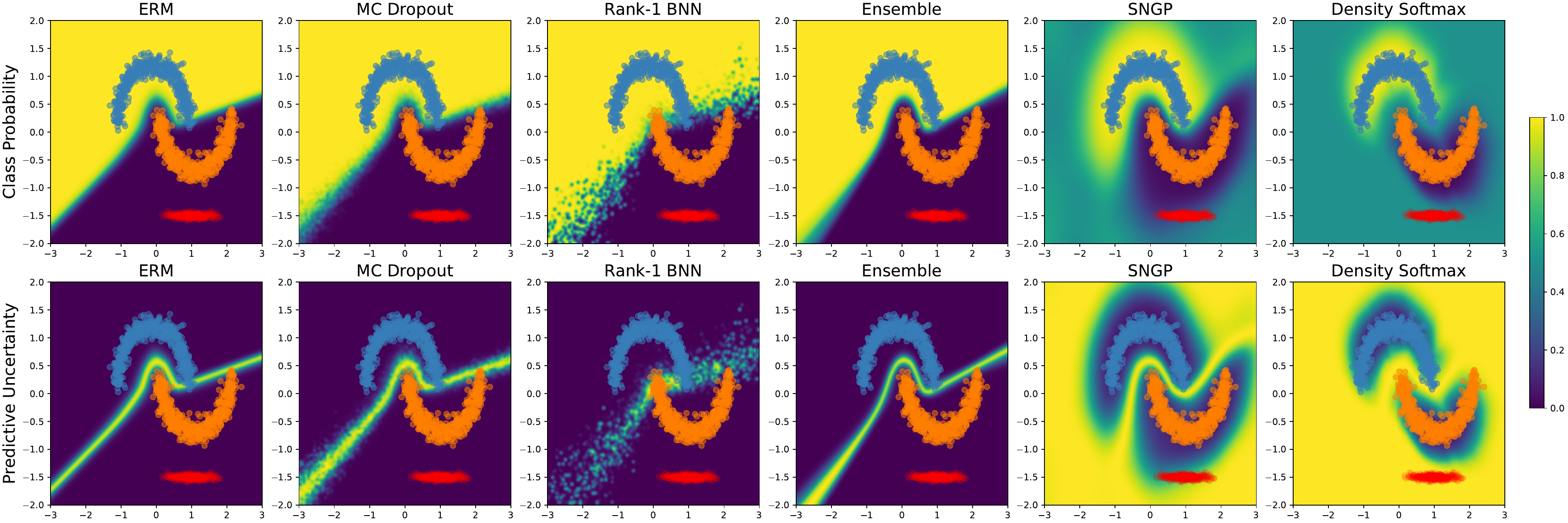}
\end{center}
  \caption{The class probability $p(y|x)$ (Top Row) and predictive uncertainty $var(y|x) = p(y|x) \cdot (1 - p(y|x))$ surface (Bottom Row) as background colors in a comparison between our Density-Softmax and different approaches on the two moons 2D classification. Training data for positive (\textcolor{orange}{Orange}) and negative classes (\textcolor{blue}{Blue}) are shown. \acrshort{OOD} data (\textcolor{red}{Red}) are not observed during training. \textbf{Our Density-Softmax achieves distance awareness with a uniform class probability and high uncertainty value on \textcolor{red}{OOD data}}. A quick demo is available at \href{https://colab.research.google.com/drive/1dqaacHzOHUPFhBDcUw7yGL-zv7GSG-8P?usp=sharing}{https://colab.research.google.com/drive/1dqaacHzOHUPFhBDcUw7yGL-zv7GSG-8P?usp=sharing}.}
\label{fig:2dmoon}
\end{figure*}

Toward a model that keeps the Deep Ensembles performance with test-time efficiency similar to Deterministic~\acrshort{ERM}, we introduce Density-Softmax, a sampling-free single-model via a combination of a density function and a Lipschitz-constrained feature extractor. By using regularization to enforce the 1-Lipschitz constraint in training, our model can improve the robustness under distribution shifts. In addition, by combining the feature density function with the softmax layer via a single forward pass, our method only needs a small number of additional parameters and latency when compared to Deterministic~\acrshort{ERM}. Importantly, this combination helps Density-Softmax be feature distance aware, i.e., its associated uncertainty metrics are monotonic functions of feature distance metrics, leading to a good uncertainty notion of when the test feature is near or far from the training set. This distance-aware property is important to help \acrshort{DNN} for both calibration and \acrshort{OOD} detection, however, it is often not guaranteed for typical \acrshort{DNN} models~\citep{Liu2020SNGP,bui2024density} (e.g., Fig.~\ref{fig:2dmoon}).

To summarize, our model includes three main components: a Lipschitz-constrained feature extractor, a lightweight Normalizing-Flows density model on the feature space, and a classifier with the softmax layer. In training-time, the feature extractor is pre-trained using \acrshort{ERM} objective and gradient-penalty regularization, aiming to achieve the 1-Lipschitz constraint. After that, the Normalizing-Flows model estimates a marginal density of the learned low-dimensional feature space. Finally, the classifier is fine-tuned with feature likelihood from the density model. In test-time, the feature likelihood is combined directly with the logit vector of the classifier to produce a softmax probability with only a single forward pass.

Our contributions can be summarized as follows:
\begin{enumerate}
    \item We introduce Density-Softmax, a reliable, sampling-free, and single \acrshort{DNN} framework via a direct combination of a density function built on a Lipschitz-constrained feature extractor with the softmax layer. Notably, our algorithm does not require any data augmentation~\citep{hendrycks*2020augmix,zhang2018mixup}, the \acrshort{OOD} set, or the post-hoc re-calibration technique~\citep{mukhoti2022deep} in training.
    \item We formally prove that our model is the solution to the minimax uncertainty risk, distance awareness on the feature space, and can reduce over-confidence of the standard softmax when the test feature is far from the training set. 
    \item We empirically show that our framework achieves a competitive robust generalization and uncertainty estimation performance with \acrshort{SOTA} on the Toy dataset with ResFFN-12-128, shifted CIFAR-10-100 with Wide~Resnet-28-10, and ImageNet with ResNet-50. Importantly, Density-Softmax requires only a single forward pass and a lightweight feature density function. Therefore it has fewer parameters and is much faster than other baselines at test-time.
\end{enumerate}

\section{Related work}
\textbf{Uncertainty and Robustness.} \citet{nado2021uncertainty,bui2022benchmark} has studied the uncertainty and robustness of modern \acrshort{DNN} approaches extensively in the benchmark of \acrshort{SOTA} baselines, mainly evaluating \acrshort{NLL}, in-out accuracy for robust generalization, and \acrshort{ECE} for calibrated uncertainty. More modern discussion of reliable \acrshort{DNN} can be found in the literature of~\citet{tran2022plex}. Among these methods, sampling-based approaches are widely used, from Gaussian Process~\citep{gardner2018GPyTorch,lee2018deepGP}, Dropout~\citep{gal2016dropouttheory,gal2017concretedropout}, \acrshort{BNN}~\citep{blundell2015weight,wen2018flipout,maddox2019swag}, to the \acrshort{SOTA} Ensembles~\citep{lakshminarayanan2017ensemble}. However, these methods often have a high number of model parameters. To resolve this issue, lightweight sampling-based models, e.g., BatchEnsemble~\citep{Wen2020BatchEnsemble:}, Rank-1~BNN~\citep{dusenberry2020rank1}, and Heteroscedastic~\citep{collier2021correlated} have been proposed recently.

\textbf{Sampling-free methods.} To tackle the scalability challenge in sampling-based methods, novel sampling-free methods have been investigated, including replacing the loss function~\citep{wei2022logitnorm,malinin2018priornet,malinin2019reverseKL,kotelevskii2022NUQ,karandikar2021soft}, the output layer~\citep{bui2024density,van2020uncertainty,mukhoti2022deep,tagasovska2019single,wang2022Vim,liu2020energybase}, or computing a closed-form posterior in Bayesian inference~\citep{kopetzki2021evaluating,sensoy2018evidential,riquelme2018deep,snoek2015scalableBNN,kristiadi2020being,charpentier2022natural}. Nevertheless, these methods often only focus on improving uncertainty quality without improving accuracy or even using additional re-calibration set to enhance this performance~\citep{mukhoti2022deep}. In the scope of uncertainty and robustness~\citep{nado2021uncertainty}, there exist distillation approaches~\citep{vadera20Generalized,Malinin2020Ensemble} and closest to our work is \acrshort{MIMO}~\citep{havasi2021training}, Posterior~Net~\citep{charpentier2020postnet}, and \acrshort{SNGP}~\citep{Liu2020SNGP}. Although these methods can improve efficiency at test-time, they still underperform Deep Ensembles regarding reliability and Deterministic~\acrshort{ERM} regarding the model's test-time efficiency.

\textbf{Improving uncertainty quality via density estimation.} Enhancing uncertainty estimates via density function has shown promising results in practice~\citep{bui2024density,kuleshov2022sharpness,charpentier2020postnet,mukhoti2022deep,kotelevskii2022NUQ,dosovitskiy2021an,lee2018asimple,sun2022OOD}. That said, \citet{dosovitskiy2021an,lee2018asimple,sun2022OOD} suffer from a heavy computational burden at test-time and only focus on \acrshort{OOD} detection. Meanwhile, our efficient method additionally provides both empirical and theoretical analysis of the calibration level (details are in Apd.~\ref{apd:additional_discuss}). Theoretically, \citet{charpentier2022natural} proves with a small number of parameters, the Normalizing-Flows model can improve the uncertainty notion of \acrshort{DNN} by providing a high likelihood when the test feature is close and a low likelihood when that is far from training data. However, this work customizes the last layer of \acrshort{DNN} with sensitive priors, not normalized by a natural exponent function, resulting in a bad robustness performance~\citep{nado2021uncertainty}. In our work, Density-Softmax utilizes a Normalizing-Flows model on the Lipschitz-constrained feature together with the logit vector of the classifier to improve the robust accuracy, uncertainty quality, and test-time efficiency.

\textbf{Improving robustness via 1-Lipschitz constraint.} 1-Lipschitz Neural Nets have been widely used to train certifiably robust classifiers in practice~\citep{tsuzuku2018lipschitz,bthune2022pay,li2019preventing,searcod_metric_2006}. It is theoretically shown to be able to defend against adversarial attack~\citep{li2019preventing}, preserve accuracy on \acrshort{IID}, and improve the robust generalization on \acrshort{OOD} data~\citep{bthune2022pay}. However, it is not clear whether this property can contribute to better uncertainty estimation. In this work, we integrate the Lipschitz-constrained feature extractor from gradient-penalty regularization~\citep{gulrajani2017improved} with our density estimation component to achieve better robust accuracy, uncertainty quality, and test-time efficiency.
\section{Density-Softmax}
\subsection{Notation and Problem setting}
Let $\mathcal{X}$ and $\mathcal{Y}$ be the sample and label space. Denote the set of joint probability distributions on $\mathcal{X} \times \mathcal{Y}$ by $\mathcal{P}_{\mathcal{X} \times \mathcal{Y}}$. A dataset is defined by a joint distribution $\mathbb{P}(X,Y) \in \mathcal{P}_{\mathcal{X} \times \mathcal{Y}}$, and let $\mathcal{P}$ be a measure on $\mathcal{P}_{\mathcal{X} \times \mathcal{Y}}$, i.e., whose realizations are distributions on $\mathcal{X} \times \mathcal{Y}$. Denote the training set by $D_s = \{(x_s^i, y_s^i)\}_{i=1}^{n_s}$, where $n_s$ is the number of data points in $D_s$, s.t., $(x_s, y_s) \sim \mathbb{P}_s(X,Y)$ and $\mathbb{P}_s(X,Y) \sim \mathcal{P}$. In the standard learning setting, a learning model that is only trained on $D_s$, arrives at a good generalization performance on the test set $D_t = \{(x_t^i, y_t^i)\}_{i=1}^{n_t}$, where $n_t$ is the number of data points in $D_t$, s.t., $(x_t, y_t) \sim \mathbb{P}_t(X,Y)$ and $\mathbb{P}_t(X,Y) \sim \mathcal{P}$. In the \acrfull{IID} setting, $\mathbb{P}_t(X,Y)$ is similar to $\mathbb{P}_s(X,Y)$, and let us use $\mathbb{P}_{iid}(X,Y)$ to represent the \acrshort{IID} test distribution. In contrast, $\mathbb{P}_t(X,Y)$ is different with $\mathbb{P}_s(X,Y)$ if $D_t$ is \acrfull{OOD} data, and let us use $\mathbb{P}_{ood}(X,Y)$ to represent the \acrshort{OOD} test distribution.

In the classification setting of representation learning, we predict a target $y \in \mathcal{Y}$, where $\mathcal{Y}$ is discrete with $K$ possible categories by using a forecast $h = \sigma(g \circ f)$, which composites a feature extractor $f: \mathcal{X} \rightarrow \mathcal{Z}$, where $\mathcal{Z}$ is feature space, a classifier embedding $g: \mathcal{Z} \rightarrow \mathbb{R}^{K}$, and a softmax layer $\sigma: \mathbb{R}^{K} \rightarrow \Delta _y$ which outputs a probability distribution $W(y): \mathcal{Y} \rightarrow  [0,1]$ within the set $\Delta_y$ of distributions over $\mathcal{Y}$; the value of probability density function of $W$ is $w$.

\subsection{Method Overview} 
Toward a reliable and efficient test-time framework, we introduce Density-Softmax. Specifically, to improve uncertainty quantification, our idea is based on the solution of the \textbf{minimax uncertainty risk}~\citep{Meinke2020Towards}, i.e.,
\begin{align}\label{eq:minimax}
    \inf_{\mathbb{P}(Y|X) \in \mathcal{P}} \left [ \sup_{\mathbb{P}^*(Y|X) \in \mathcal{P}^*} S(\mathbb{P}(Y|X), \mathbb{P}^*(Y|X)) \right ],
\end{align}
where $S(.,\mathbb{P}^*(Y|X))$ is strictly proper scoring rules, $\mathbb{P}(Y|X)$ is predictive, and $\mathbb{P}^*(Y|X)$ is the data-generation distribution. When $\mathcal{X}_{ood}=\mathcal{X}/\mathcal{X}_{iid}$, for the Brier Score~\citep{brier1950verification}, the solution of the risk was shown by~\citet{Liu2020SNGP} as
\begin{align}
    \mathbb{P}(Y|X) = \mathbb{P}(Y|X_{iid}) \mathbb{P}^*(X_{iid}) + \mathbb{U}(Y|X_{ood}) \mathbb{P}^*(X_{ood}),
\end{align}
where $X_{iid}$ is \acrshort{IID}, $X_{ood}$ is \acrshort{OOD} sample variable, and $\mathbb{U}$ stands for uniform distribution. According to this result, we summarize the overview of Density-Softmax, a solution of the minimax uncertainty risk by Thm.~\ref{thm:optimal}, in Fig.~\ref{fig:framework}.
\begin{figure}[t!]
\begin{center}
\includegraphics[width=1.0\linewidth]{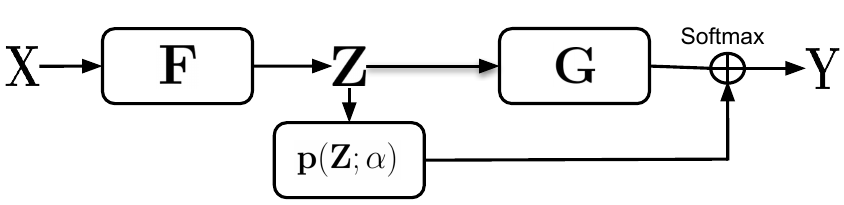}
\end{center}
   \caption{The overall architectures of Density-Softmax, including an encoder $f$, a classifer $g$, and a density function $p(Z;\alpha)$. The rectangle boxes represent these functions. The circle with two cross lines represents the softmax layer. The 3 training steps and inference process follow Algorithm~\ref{alg:algorithm}.}
\label{fig:framework}
\end{figure}
Under our framework, the predictive distribution is equivalent to 
\begin{align}
    \mathbb{P}(Y|X) = \sigma(p(Z;\alpha) \cdot g(Z))\text{, with } Z = f(X),
\end{align}
where $f: \mathcal{X} \rightarrow \mathcal{Z}$ is the feature extractor, $g: \mathcal{Z} \rightarrow \mathbb{R}^K$ is the last classifier layer, $\sigma: \mathbb{R}^K \rightarrow \Delta_y$ is the softmax layer, and $p(Z;\alpha)$ is the density function to measure the distance on feature space $\mathcal{Z}$ with parameter $\alpha$. 

To improve robustness and test-time efficiency, we use the gradient-penalty regularization~\citep{gulrajani2017improved} to enforce the feature extractor $f$ to be 1-Lipschitz based on the Rademacher theorem:
\begin{theorem}\label{theo:radamacher}~\citep{federer2014geometric}
    If $f: \mathcal{X} \rightarrow \mathcal{Z}$ is a locally Lipschitz continuous function, then $f$ is differentiable almost everywhere. Moreover, if $f$ is Lipschitz continuous, then $L(f)=\sup_{x\in \mathbb{R}^n}||\nabla_x f(x)||_2,$ where $L(f)$ is the Lipschitz constant of $f$.
\end{theorem}
\begin{remark}\label{remark:1-Lipschitz} The gradient-penalty $(||\nabla_x f(x)||_2-1)^2$ enforces $\sup_{x\in \mathbb{R}^n}||\nabla_x f(x)||_2 = 1$ and Thm.~\ref{theo:radamacher} suggests $f(x)$ satisfy 1-Lipstchiz constraint by $L(f)=1$, i.e., $||f(x_1)-f(x_2)||_2\leq ||x_1 - x_2||_2$ (details are in Apd.~\ref{apd:additional_discuss}). This prevents $f(x)$ from being overly sensitive to the meaningless perturbations of the sample and assures that if the sample is similar, the feature will also be similar. This 1-Lipstchiz $f(x)$ is proved to be robust on the corrupted data by the Local Robustness Certificates property:
\end{remark}
\begin{property}~\citep{tsuzuku2018lipschitz} 
    For any 1-Lipschitz $f(x)$, i.e., $L(f)=1$, the robustness radius $\epsilon$ of binary classifier $c = sign \circ f$ at example $x$ verifies $\epsilon \geq ||f(x)||$, where $\epsilon = \min_{\delta \in \mathcal{X}}||\delta|| \text{ s.t., } c(x+\delta) \neq c(x)$. (This can be generalized to the multi-class case by~\citet{bthune2022pay}).
\end{property}

\subsection{Algorithm}
\textbf{Training.} Based on the motivation mentioned above, in the first training step, we optimize the model by using \acrshort{ERM}~\citep{vapnik1998erm} and gradient-penalty regularization~\citep{gulrajani2017improved} from training data $D_s$ by solving
\begin{align}\label{method:1st-optimization}
    \min_{\theta_{g,f}} \{ \mathbb{E}_{(x,y) \sim D_s} [ &-y \log \left ( \sigma(g(f(x))) \right )\nonumber\\
    &+ \lambda (||\nabla_x f(x)||_2-1)^2 ] \},
\end{align}
where $\theta_{g,f}$ is the parameter of encoder $f$ and classifier $g$, $\lambda$ is the gradient-penalty coefficient, and $||\nabla_x f(x)||_2$ is the Spectral norm of the Jacobian matrix $\nabla_x f(x)$.

After that, we freeze the parameter $\theta_{f}$ of $f$ to estimate density on the learned representation space $\mathcal{Z}$ by positing a Normalizing-Flows model for $p(Z;\alpha)$~\citep{papamakarios2021normalizing,dinh2017density}, then fitting \acrshort{MLE} to yield $\alpha$ and scale $p(Z;\alpha)$ to a specified range. We use Normalizing-Flows to fit the statistical density model $p(Z;\alpha)$ because it is simple, lightweight, provable, and provides exact log-likelihood~\citep{charpentier2022natural}. Specifically, we do \acrshort{MLE} w.r.t. the logarithm by optimizing
\begin{align}\label{method:densityestimate}
    \max_{\alpha} \{ \mathbb{E}_{\left(z = f(x)\right) \sim D_s} [ &\log(p(z;\alpha))\nonumber\\
    &:= \log(p(t;\alpha)) + \log \left | \det \left ( \frac{\partial t}{\partial z}\right ) \right | ]\},
\end{align}
 where random variable $t=s_{\alpha}(f(x))$ and $s$ is a bijective differentiable function.

Finally, to enhance generalization after combining with the likelihood value of the density function $p(Z;\alpha)$, we update the weight of classifier $g$ to normalize with the likelihood value by optimizing with the objective function as follows
\begin{equation}\label{method:2nd-optimization}
    \min_{\theta_{g}} \left \{ \mathbb{E}_{\left(z = f(x), y\right) \sim D_s} \left [ -y \log \left ( \sigma(p(z;\alpha) \cdot g(z)) \right ) \right ] \right \}.
\end{equation}

\begin{figure}[t]
\begin{algorithm}[H]
    \caption{Density-Softmax: Training and Inference}\label{alg:algorithm}
    \begin{algorithmic}
        \STATE {\bfseries Training input:} Training data $D_s$, encoder $f$, density function $p(Z;\alpha)$, classifier $g$, learning rate $\eta$, gradient-penalty coefficient $\lambda$.\;
        \FOR{$e=1\rightarrow \text{pre-train epochs}$}
            \STATE Sample $(x,y) \in D_B$ with a mini-batch $B$ from $D_s$\;
            \STATE Update $\theta_{g,f}$ as:  
            \STATE \vspace{-0.2in}\begin{align*}
                \theta_{g,f} -\eta \nabla_{\theta_{g,f}}\mathbb{E}_{(x,y)} [&-y \log \left ( \sigma(g(f(x))) \right )\\
                &+ \lambda (||\nabla_x f(x)||_2-1)^2 ]
            \end{align*}
            \vspace{-0.2in}
        \ENDFOR
        \FOR{$e=1\rightarrow \text{train-density epochs}$}
            \STATE Sample $x\in D_B$ with a mini-batch $B$ from $D_s$\;
            \STATE Update $\alpha$ as:  
            \STATE $\quad \alpha -\eta \nabla_{\alpha} \mathbb{E}_{z=f(x)} \left[ -\log(p(t;\alpha)) - \log \left | \det \left ( \frac{\partial t}{\partial z}\right ) \right |\right]$
        \ENDFOR
        \STATE {\bfseries Scale:} $p(Z;\alpha)$ to $\left(0,1\right]$\;
        \FOR{$e=1\rightarrow \text{re-optimize classifier epochs}$}
            \STATE Sample $(x,y)\in D_B$ with a mini-batch $B$ from $D_s$\;
            \STATE Update $\theta_{g}$ as: 
            \STATE $\quad \theta_{g} -\eta \nabla_{\theta_{g}}\mathbb{E}_{\left(z = f(x),y\right)} \left [ -y \log \left (  \sigma(p(z;\alpha) \cdot g(z)) \right ) \right ]$
        \ENDFOR
        \STATE {\bfseries Inference (test) input:} Test sample $x_t\in D_t$
        \STATE Make a prediction for $x_t$ by following Eq.~\ref{eq:inference}
    \end{algorithmic}
\end{algorithm}
\vspace{-0.1in}
\end{figure}

\textbf{Inference.} After completing the training process, for a new test sample $x_t \in D_t$ at the test-time, we perform prediction by combining the density function on feature space $p(z_t;\alpha)$ and classifier $g$ to predict with only a single forward pass by the following formula
\begin{equation}\label{eq:inference}
    p(y=i|x_t) = \frac{\exp(p(z_t;\alpha) 
    \cdot (z_t^\top \theta_{g_i}))}{\sum_{j=1}^K \exp(p(z_t;\alpha) \cdot (z_t^\top \theta_{g_j}))}, \forall i \in \mathcal{Y},
\end{equation}
where $z_t=f(x_t)$ is the feature of test sample $x_t$.

\begin{remark}\label{remark:efficiency} \textbf{(Computational efficiency at test-time)}
    Eq.~\ref{eq:inference} shows that the complexity of 
    Density-Softmax at test-time is similar to Deterministic~\acrshort{ERM}, i.e., $\mathcal{O}(1)$ by requiring only a single forward pass to produce the softmax probability. Its computation only needs to additionally compute $p(z_t;\alpha)$, therefore, only higher than Deterministic~\acrshort{ERM} by the additional parameter $\alpha$ and the latency of $p(z_t;\alpha)$. These numbers are often small in practice by the tables in Section~\ref{experiments}.
\end{remark}

\begin{remark} \textbf{(Disentangling aleatoric and epistemic uncertainty)} Our Eq.~\ref{eq:inference} returns both aleatoric and epistemic uncertainty. That said, it is still possible to separate these two kinds of uncertainty in our framework. In particular, using the same disentangling technique with \acrshort{DDU}~\citep{mukhoti2022deep}, we can disentangle aleatoric uncertainty by returning the softmax probability without multiplying with the density model in Eq.~\ref{eq:inference}, and epistemic uncertainty by returning the likelihood value of the density model. It is worth noticing that this additional step does not require any retraining step or additional forward passes, hence, does not harm the test-time efficiency. 
\end{remark}

The pseudo-code for our proposed Density-Softmax framework's training and inference processes is presented in Algorithm~\ref{alg:algorithm}. Due to the fact that likelihood $p(Z;\alpha) \in \left(-\infty;+\infty\right)$, the exponential function in Eq.~\ref{eq:inference} can return an undefined value if the likelihood $p(Z;\alpha) \rightarrow \infty$. Therefore, we can scale it to the range of $\left(0,1\right]$ to avoid this numerical issue. The detail for controlling the scale of likelihood is provided in Apd~\ref{apd:discussdensity}. 
\section{Theoretical Analysis of Uncertainty Estimation}
Regarding uncertainty, before presenting the main result, we first recall the definition of distribution calibration~\citep{dawid1982calibration} for the forecast:
\begin{definition}~\citep{song2019distribution,kuleshov2018accurate}\label{def:calibration}
A forecast $h$ is said to be {\bf distributional calibrated} if 
\begin{align*}
    \mathbb{P}(Y=y|h(x) = W) = w(y) \text{, } \forall y \in \mathcal{Y}, W \in \Delta_y.
\end{align*}
\end{definition}
Intuitively, this means the forecast $h$ is well-calibrated if its outputs truthfully quantify the predictive uncertainty. E.g., if we take all data points $x$ for which the model predicts $[h(x)]_y=0.3$, we expect $30\%$ of them to indeed take on the label $y$. To quantify distributional calibrated in Def.~\ref{def:calibration}, an approximate estimator of the Calibration Error~\citep{murphy1973new} in expectation was given by~\citet{naeini2015ece} and is still the most commonly used measure for a multi-class model. It is referred to as the \textbf{\acrfull{ECE}} of model $h: \mathcal{X} \rightarrow \Delta_y$ is defined as
\begin{equation}\label{eq:ece}
    \text{ECE}(h) := \sum_{m=1}^M \frac{\left | B_m \right |}{N} \left | \text{acc}(B_m) - \text{conf}(B_m)\right |,
\end{equation}
where $B_m$ is the set of sample indices whose confidence falls into $\left ( \frac{m-1}{M}, \frac{m}{M} \right ]$ in $M$ bins, $N$ is the number of samples, bin-wise mean accuracy $\text{acc}(B_m) := \frac{1}{\left | B_m \right |} \sum_{i \in B_m} \mathbb{I}(\argmax_{y \in \mathcal{Y}}[h(x_i)]_y = y_i)$, and bin-wise mean confidence $\text{conf}(B_m) := \frac{1}{\left | B_m \right |} \sum_{i \in B_m} \max_{y \in \mathcal{Y}}[h(x_i)]_y$.

In addition, let us inherit the definition of Feature Distance Awareness from~\citet{Liu2020SNGP}:
\begin{definition}\label{def:distanceaware} 
    The predictive distribution $\sigma(g(z_t))$ on test feature $z_{t}=f(x_t)$ is said \textbf{feature distance aware} if there exists $u(z_{t})$, a summary statistics of $\sigma(g(z_t))$, that quantifies model uncertainty (e.g., entropy, predictive variance, etc.) and reflects the distance between $z_{t}$ and the training data $Z_s$ w.r.t. $\left \| \cdot \right \|_\mathcal{Z}$, i.e., $u(z_{t}) := v(d(z_{t},Z_s))$, where $v$ is a monotonic function and $d(z_{t},Z_s) := \mathbb{E}\left \| z_{t} - Z_s \right \|_{\mathcal{Z}}$ is the distance between $z_{t}$ and the training data $Z_s$.
\end{definition} 

Then, we recall a Lemma when $p(Z;\alpha)$ is a Normalizing-Flows model~\citep{papamakarios2021normalizing}: 
\begin{lemma}\label{lemma:density}~\citep{charpentier2022natural}
    If $p(Z;\alpha)$ is parametrized with a Gaussian Mixture Models or a radial Normalizing-Flows, then $\lim_{d(z_{t},Z_s) \rightarrow \infty} p(z_t;\alpha) \rightarrow 0$.
\end{lemma}
Lemma~\ref{lemma:density} intuitively leads Eq.~\ref{eq:inference} to reasonable uncertainty estimation for the two limit cases of strong \acrshort{IID} and \acrshort{OOD} data. In particular, for very unlikely \acrshort{OOD} data, i.e., $d(z_{t},Z_s) \rightarrow \infty$, the prediction will go to uniform. Conversely, for very likely \acrshort{IID} data, i.e., $d(z_{t},Z_s) \rightarrow 0$, the prediction follows the in-domain predictive distribution. Now, we formally show this property below:
\begin{theorem}\label{theo:uniform_ood_normal_iid} 
    Density-Softmax provides a uniform prediction, i.e., $\sigma(p(z_{ood};\alpha) \cdot g(z_{ood})) = \mathbb{U}$ when $d(z_{ood},Z_s)\rightarrow \infty$ by $p(z_{ood});\alpha) \rightarrow 0$, and preserves the in-domain prediction, i.e., $\sigma(p(z_{iid};\alpha) \cdot g(z_{iid})) = \sigma(g(f(x_{iid})))$ when $d(z_{iid},Z_s)\rightarrow 0$ by $p(z_{iid});\alpha) \rightarrow 1$. The proof is in Apd.~\ref{proof:theo:uniform_ood_normal_iid}.
\end{theorem}
Based on these results, we next show that Density-Softmax is the solution of the minimax uncertainty risk in Eq.~\eqref{eq:minimax} and satisfies Def.~\ref{def:distanceaware} about feature distance awareness:
\begin{theorem}\label{thm:optimal}
    Density-Softmax's prediction is the optimal solution of the minimax uncertainty risk, i.e., $\sigma(p(f(X);\alpha) \cdot g(f(X))= \underset{\mathbb{P}(Y|X) \in \mathcal{P}}{\arg\inf} \left [ \sup_{\mathbb{P}^*(Y|X) \in \mathcal{P}^*} S(\mathbb{P}(Y|X), \mathbb{P}^*(Y|X)) \right ]$. The proof is in Apd.~\ref{proof:thm:optimal}.
\end{theorem}

\begin{theorem}\label{thm:distanceaware}
    The predictive distribution of Density-Softmax $\sigma(p(z=f(x);\alpha) \cdot (g \circ f(x)))$ is distance aware on the feature space $\mathcal{Z}$ by satisfying the condition in Def.~\ref{def:distanceaware}, i.e., there exists a summary statistic $u(z_{t})$ of $\sigma(p(z_t;\alpha) \cdot (g \circ z_t))$ on the new test feature $z_t=f(x_t)$ s.t., $u(z_{t}) = v(d(z_{t},Z_s))$, where $v$ is a monotonic function and $d(z_{t},Z_s) = \mathbb{E}\left \| z_{t} - Z_s \right \|_{\mathcal{Z}}$ is the distance between $z_{t}$ and the training features random variable $Z_s$.
\end{theorem}
\begin{proof}[Proof sketch] 
    Thm.~\ref{thm:distanceaware} is proved by showing  (1) $p(z_t=f(x_t);\alpha)$ is monotonically decreasing w.r.t. distance $\mathbb{E}\left \| z_{t} - Z_s \right \|_{\mathcal{Z}}$ and (2) $p(z=f(x);\alpha) \cdot g$ is distance aware. Full proof is in Apd.~\ref{proof:thm:distanceaware}.
\end{proof}
\begin{remark}\label{remark:distanceaware}
Thm.~\ref{thm:distanceaware} shows our Density-Softmax is distance aware on the feature representation $\mathcal{Z}$, i.e., its predictive probability reflects the distance between the test feature and the training set. This is a necessary condition for a \acrshort{DNN} to achieve high-quality uncertainty estimation~\citep{Liu2020SNGP}. By showing $p(Z;\alpha)$ is monotonically decreasing w.r.t. feature distance, this proves when the likelihood of $p(Z;\alpha)$ is high, our model is certain on \acrshort{IID} data, and when the likelihood of $p(Z;\alpha)$ decreases on \acrshort{OOD} data, the certainty will decrease correspondingly.
\end{remark}
Following these results, we finally present how Density-Softmax can enhance the uncertainty quality of \acrshort{DNN} with the standard softmax by reducing its over-confidence in the proposition below:
\begin{proposition}\label{prop:ECE}
    If the predictive distribution of the standard softmax $\sigma(g \circ f)$ makes $\text{acc}(B_m) \leq \text{conf}(B_m), \forall B_m, m\in [M]$, where $B_m$ is the set of sample indices whose confidence falls into $\left ( \frac{m-1}{M}, \frac{m}{M} \right ]$ in $M$ bins, then Density-Softmax $\sigma((p(f;\alpha) \cdot g)\circ f)$ can improve calibrated-uncertainty in terms of \acrshort{ECE} in Eq.~\ref{eq:ece}, i.e., $\text{ECE}(\sigma((p(f;\alpha)\cdot g)\circ f) \leq \text{ECE}(\sigma(g \circ f))$. The proof is in Apd.~\ref{proof:prop:ECE}.
\end{proposition}
\begin{remark}\label{remark:ece} 
    The condition $\text{acc}(B_m) \leq \text{conf}(B_m), \forall B_m$ is a specific case of the over-confidence for every $M$ bins. And if so, Prop.~\ref{prop:ECE} shows Density-Softmax can reduce \acrshort{ECE} of the standard softmax, which also be empirically confirmed in Fig.~\ref{fig:ece} in the next section.
\end{remark}
\begin{table*}[t!]
\begin{minipage}{1.0\textwidth}
\caption{Results for Wide~Resnet-28-10 on CIFAR-10, averaged over 10 seeds: negative log-likelihood (lower is better), accuracy (higher is better), and expected calibration error (lower is better). NLL, Acc, and ECE represent performance on \acrshort{IID} test set. cNLL, cAcc, and cECE are NLL, accuracy, and ECE averaged over \acrshort{OOD} CIFAR-10-C’s corruption types \& intensities, oNLL, oAcc, and oECE are for real-world shift CIFAR-10.1. \#Params is the number of model parameters, Latency is the milliseconds to inference per sample on RTX~A5000, ours is colored by \textcolor{blue}{blue} (lower are better). Best scores with the significant test are marked in \textbf{bold}. Details are in Tab.~\ref{tab:cifar10_results+}.}
\label{tab:cifar10_results}
\centering
\scalebox{0.78}{
\begin{tabular}{lccccccccccc}
\toprule
\textbf{Method} & \textbf{NLL($\downarrow$)} & \textbf{Acc($\uparrow$)} & \textbf{ECE($\downarrow$)} & \textbf{cNLL($\downarrow$)} & \textbf{cAcc($\uparrow$)} & \textbf{cECE($\downarrow$)} & \textbf{oNLL($\downarrow$)} & \textbf{oAcc($\uparrow$)} & \textbf{oECE($\downarrow$)} & \textbf{\#Params($\downarrow$)} & \textbf{Latency($\downarrow$)}\\
\midrule
Deterministic~ERM & 0.159 & 96.0 & 0.023 & 1.05 & 76.1 & 0.153 & 0.40 & 89.9 & 0.064 & \textbf{36.50M} & \textbf{518.12}\\
Rank-1 BNN & 0.128 & 96.3 & \textbf{0.008} & 0.84 & 76.7 & 0.080 & 0.32 & 90.4 & 0.033 & 36.65M & 1,427.67\\
Heteroscedastic & 0.156 & 96.0 & 0.023 & 1.05 & 76.1 & 0.154 & 0.38 & 90.1 & 0.056 & 36.54M & 560.43\\
SNGP & 0.134 & 96.0 & 0.007 & 0.74 & 78.5 & 0.078 & 0.43 & 89.7 & 0.064 & 37.50M & 916.26\\
DDU (w/o TS) & 0.159 & 96.0 & 0.024 & 1.06 & 76.0 & 0.153 & 0.39 & 89.8 & 0.063 & 37.60M & 954.31\\
NatPN & 0.242  & 92.8  & 0.041 & 0.89 & 73.9 & 0.121 & 0.46 & 86.3 & 0.049 & 36.58M & 601.35\\
BatchEnsemble & 0.136 & 96.3 & 0.018 & 0.97 & 77.8 & 0.124 & 0.35 & 90.6 & 0.048 & 36.58M & 1,498.01\\
Deep Ensembles & \textbf{0.114} & \textbf{96.6} & 0.010 & 0.81 & 77.9 & 0.087 & 0.28 & \textbf{92.2} & 0.025 & 145.99M &  1,520.34\\
\textbf{Density-Softmax} & 0.137 & 96.0 & 0.010 & \textbf{0.68} & \textbf{79.2} & \textbf{0.060} & \textbf{0.26} & 91.6 & \textbf{0.016} &  \color{blue}{36.58M} & \color{blue}{\textbf{520.53}}\\
\bottomrule
\end{tabular}}
\end{minipage}

\begin{minipage}{1.0\textwidth}
\caption{Results for Wide~Resnet-28-10 on CIFAR-100: cNLL, cAcc, and cECE are for CIFAR-100-C. AUPR-S and AUPR-C are AUPR for \acrshort{OOD} detection on SVHN and CIFAR-10 (higher are better). Details are in Tab.~\ref{tab:cifar100_results+}.} 
\label{tab:cifar100_results}
\centering
\scalebox{0.8}{
\begin{tabular}{lcccccccccc}
\toprule
\textbf{Method} & \textbf{NLL($\downarrow$)} & \textbf{Acc($\uparrow$)} & \textbf{ECE($\downarrow$)} & \textbf{cNLL($\downarrow$)} & \textbf{cAcc($\uparrow$)} & \textbf{cECE($\downarrow$)} & \textbf{AUPR-S($\uparrow$)} & \textbf{AUPR-C($\uparrow$)} & \textbf{\#Params($\downarrow$)} & \textbf{Latency($\downarrow$)}\\
\midrule
Deterministic~ERM & 0.875 & 79.8 & 0.086 & 2.70 & 51.4 & 0.239 & 0.882 & 0.745 & \textbf{36.55M} & \textbf{521.15}\\
Rank-1 BNN  & 0.692 & 81.3 & \textbf{0.018} & 2.24 & 53.8 & 0.117 & 0.884 & 0.797 & 36.71M & 1,448.90\\
Heteroscedastic & 0.833 & 80.2 & 0.059 & 2.40 & 52.1 & 0.177 & 0.881 & 0.752 & 37.00M & 568.17\\
SNGP & 0.805 & 80.2 & 0.020 & 2.02 & 54.6 & 0.092 & \textbf{0.923} & 0.801 & 37.50M & 926.99\\
DDU (w/o TS) & 0.877 & 79.7 & 0.086 & 2.70 & 51.3 & 0.240 & 0.890 & 0.797 & 37.60M & 959.25\\
NatPN & 1.249 & 76.9 & 0.091 & 2.97 & 48.0 & 0.265 & 0.875 & 0.768 & 36.64M & 613.44\\
BatchEnsemble & 0.690 & 81.9 & 0.027 & 2.56 & 53.1 & 0.149 & 0.870 & 0.757 & 36.63M & 1,568.77\\
Deep Ensembles & \textbf{0.666} & \textbf{82.7} & 0.021 & 2.27 & 54.1 & 0.138 & 0.888 & 0.780 & 146.22M & 1,569.23\\
\textbf{Density-Softmax} & 0.780 & 80.8 & 0.038 & \textbf{1.96} & \textbf{54.7} & \textbf{0.089} & 0.910 & \textbf{0.804} & \color{blue}{36.64M} & \color{blue}{\textbf{522.94}}\\
\bottomrule
\end{tabular}}
\end{minipage}

\begin{minipage}{1.0\textwidth}
\caption{Results for Resnet-50 on ImageNet: cNLL, cAcc, and cECE are for ImageNet-C. Details are in Tab.~\ref{tab:imagenet_results+}.}
\label{tab:imagenet_results}
\centering
\scalebox{0.8}{
\begin{tabular}{lcccccccc}
\toprule
\textbf{Method} & \textbf{NLL($\downarrow$)} & \textbf{Acc($\uparrow$)} & \textbf{ECE($\downarrow$)} & \textbf{cNLL($\downarrow$)} & \textbf{cAcc($\uparrow$)} & \textbf{cECE($\downarrow$)} & \textbf{\#Params($\downarrow$)} & \textbf{Latency($\downarrow$)}\\
\midrule
Deterministic~ERM & 0.939 & 76.2 & 0.032 & 3.21 & 40.5 & 0.103 & \textbf{25.61M} & \textbf{299.81}\\
Rank-1 BNN & 0.886 & 77.3 & 0.017 & 2.95 & 42.9 & 0.054 & 26.35M & 690.14\\
Heteroscedastic & 0.898 & 77.5 & 0.033 & 3.20 & 42.4 & 0.111 & 58.39M & 337.50\\
SNGP & 0.931  & 76.1  & \textbf{0.013} & 3.03 & 41.1 & 0.045 & 26.60M & 606.11\\
MIMO & 0.887  & 77.5  & 0.037 & 3.03 & 43.3 & 0.106 & 27.67M & 367.17\\
BatchEnsemble & 0.922 & 76.8 & 0.037 & 3.09 & 41.9 & 0.089 & 25.82M & 696.81\\
Deep Ensembles & \textbf{0.857} & \textbf{77.9} & 0.017 & 2.82 & \textbf{44.9} & 0.047 & 102.44M& 701.34\\
\textbf{Density-Softmax} & 0.885 & 77.5 & 0.019  & \textbf{2.81} & 44.6 & \textbf{0.042} & \color{blue}{25.88M} & \color{blue}{\textbf{299.90}}\\
\bottomrule
\end{tabular}}
\end{minipage}
\end{table*}

\section{Experiments}\label{experiments}
\begin{figure}[ht!]
    \centering
    \includegraphics[width=1.0\linewidth]{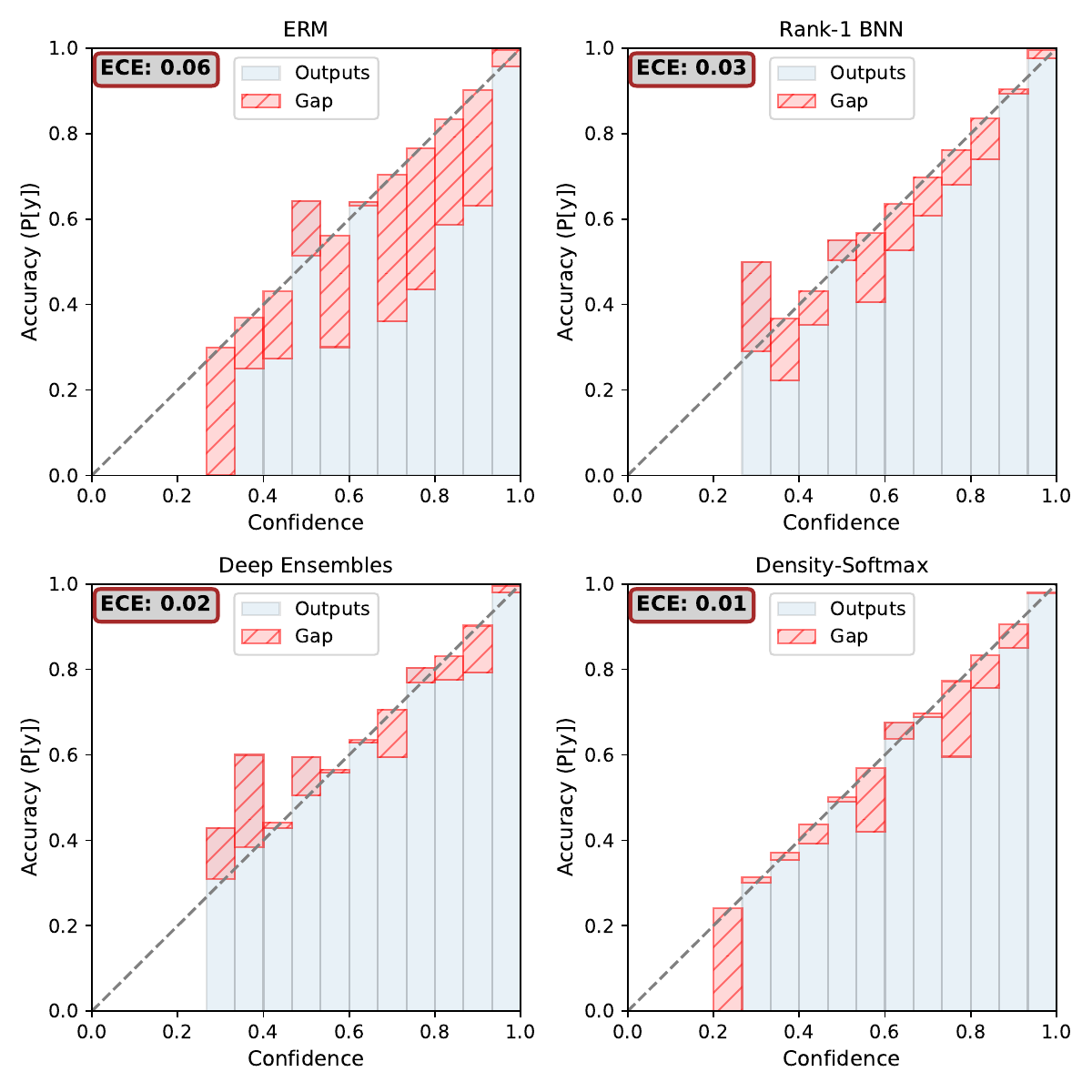}
    \vspace{-0.2in}
    \caption{Reliability diagram of Density-Softmax v.s. different approaches, trained on CIFAR-10, test on CIFAR-10.1 (v6). Density-Softmax is better-calibrated than others. Details are in Apd.~\ref{apd:results_calib}.}
    \label{fig:ece}
\end{figure}
We follow experimental settings based on the source code of~\citet{nado2021uncertainty}. Baseline details are in Apd.~\ref{apd:baseline}. Datasets, evaluation metrics, training details, architectures and hyper-parameters, and source code and computing system details are in Apd.~\ref{apd:implementation}. Our source code is available at \href{https://github.com/Angie-Lab-JHU/density_softmax}{https://github.com/Angie-Lab-JHU/density\_softmax}.

\subsection{Robustness Performance}
\textbf{Density-Softmax achieves competitive robust generalization with \acrshort{SOTA}.} Tab.~\ref{tab:cifar10_results}, \ref{tab:cifar100_results}, and \ref{tab:imagenet_results} show benchmark results across shifted datasets. We observe Density-Softmax achieve a competitive result with the \acrfull{SOTA} in \acrshort{NLL} and accuracy under distribution shifts. Specifically, our method has the lowest \acrshort{NLL} and highest accuracy with $0.68, 79.2\%$, $1.96, 54.7\%$, and $2.81, 44.6\%$ respectively in the corrupted \acrshort{OOD} datasets CIFAR-10-C, CIFAR-100-C, and ImageNet-C. Regarding the real-world shift CIFAR-10.1, it also achieves the lowest \acrshort{NLL} with $0.26$, and $91.6\%$ in accuracy, higher than other methods and only lower than Deep Ensembles by $0.6\%$. Although there is a trade-off between \acrshort{IID} and \acrshort{OOD} performance by the Lipschitz constraint in Apd.~\ref{apd:additional_ablation}, it is also worth noticing that with an appropriate $\lambda$, our method can still preserve a high accuracy in \acrshort{IID} at the same time and outperforms many baselines. E.g., it achieves $77.5\%$ in ImageNet, higher than Deterministc~\acrshort{ERM}, Rank-1~\acrshort{BNN}, \acrshort{SNGP}, BatchEnsemble, etc. More details about benchmark comparison are in Apd.~\ref{apd:results_benchmark}.

\begin{figure*}[t!]
    \centering
    \setlength{\tabcolsep}{1pt}
    \begin{tabular}{ccc}
         \includegraphics[width=0.33\textwidth]{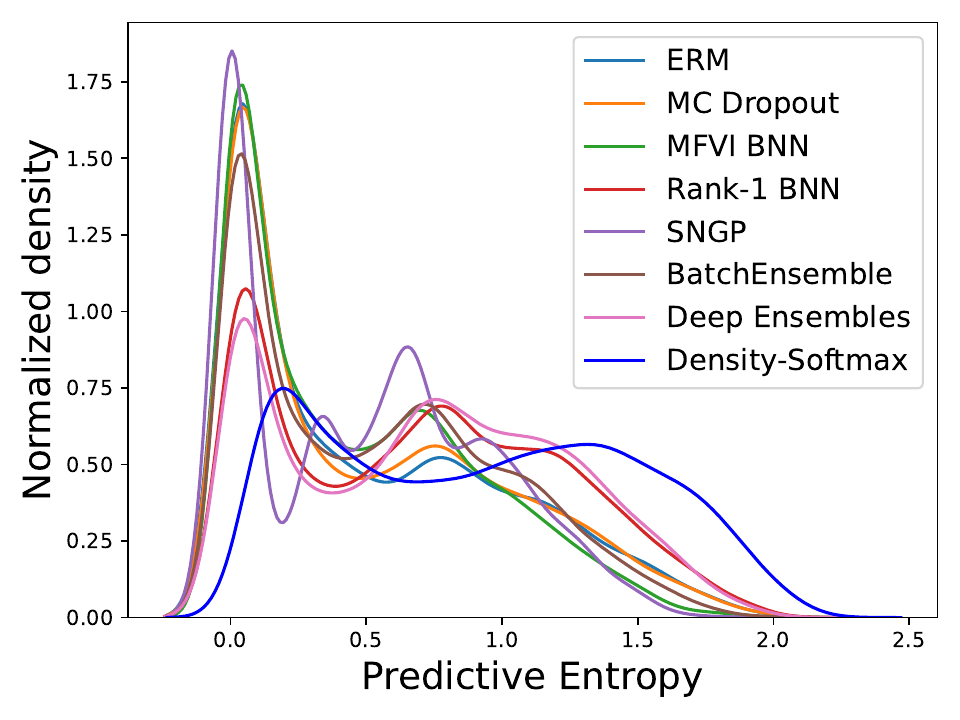}&
         \includegraphics[width=0.33\textwidth]{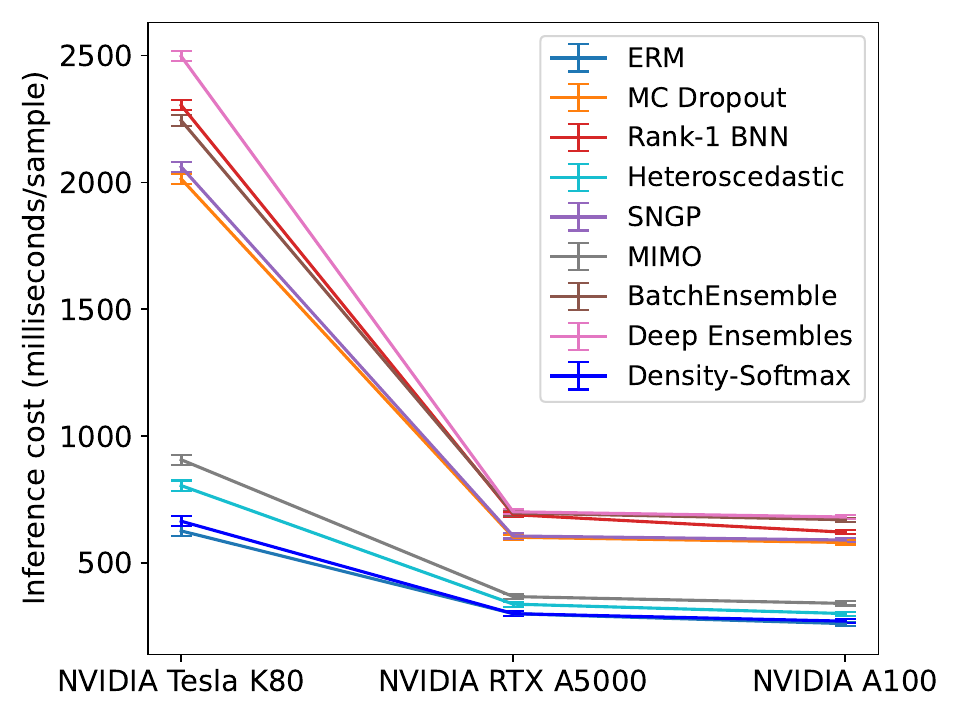}&
         \includegraphics[width=0.33\textwidth]{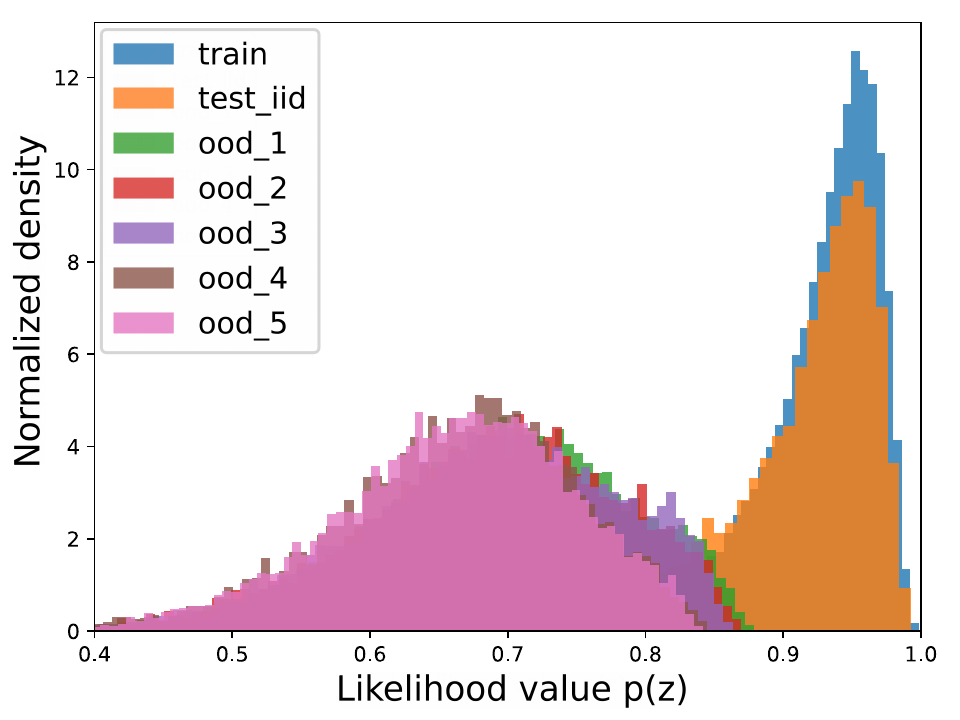}\\
         \small (a) & (b) & (c)
    \end{tabular}
    \vspace{-0.08in}
    \caption{(a) PDF plot of predictive entropy $\mathrm{H}(p(y|x))$ for the semantic shift. Density-Softmax provides the highest entropy with high density for \acrshort{OOD}; (b) Inference cost comparison at test-time on ImageNet. Our model consistently outperforms \acrshort{SOTA} across modern GPU architectures; (c) Histogram of $p(z;\alpha)$'s likelihood. \textcolor{blue}{Blue} represents on CIFAR-10 train, \textcolor{orange}{Orange} is \acrshort{IID} test, \textcolor{green}{Green}, \textcolor{red}{Red}, \textcolor{purple}{Purple}, \textcolor{brown}{Brown}, \textcolor{pink}{Pink} are \acrshort{OOD} from 1-5 shift levels on CIFAR-10-C. It produces high values on \acrshort{IID} and lower values on \acrshort{OOD} w.r.t. intensity levels.}
    \label{fig:main}
\end{figure*}

\subsection{Uncertainty Estimation Performance}
\textbf{Density-Softmax achieves competitive uncertainty performances with \acrshort{SOTA}.} From the tables, we also observe our model has a competitive uncertainty quality and even sometimes outperforms the \acrshort{SOTA} like Rank-1~\acrshort{BNN}, \acrshort{SNGP}, and Deep Ensembles, especially under \acrshort{OOD} settings. E.g., it achieves the lowest cECE with $0.060$ in CIFAR-10-C, $0.089$ in CIFAR-100-C, and $0.042$ in ImageNet-C. Similarly, it has the best \acrshort{AUPR}-C with $0.804$ in CIFAR-10 and $0.016$ oECE in CIFAR-10.1.\\
To take a closer look at the calibration, we visualize reliability diagrams based on the \acrshort{ECE} in Fig.~\ref{fig:ece}. We observe that Density-Softmax is better calibrated than other methods in the real-world shifted \acrshort{OOD} test set. E.g., compared to Deterministic~\acrshort{ERM}, our model is less over-confident, confirming Prop.~\ref{prop:ECE}. Meanwhile, compared to Rank-1~\acrshort{BNN} and Deep Ensembles, it is less under-confidence. Calibration details with reliability diagrams in both \acrshort{IID} and \acrshort{OOD} are in Apd.~\ref{apd:results_calib}.

\textbf{Density-Softmax achieves distance awareness.} From Fig.~\ref{fig:2dmoon}, we observe that our model achieves distance awareness by having uniform class probability and high uncertainty value on \acrshort{OOD} data on the two moons dataset, confirming Thm.~\ref{thm:distanceaware}. Meanwhile, Deterministic~\acrshort{ERM}, \acrshort{MC}~Dropout, Rank-1~\acrshort{BNN}, and Ensembles can not provide distance awareness as they provide no informative variance for \acrshort{OOD} data. Similar observations are in Apd.~\ref{apd:results_toy} with different uncertainty metrics and datasets.

\textbf{Density-Softmax produces high entropy on \acrshort{OOD} and low entropy on \acrshort{IID} data under semantic shift.} Fig.~\ref{fig:main}~(a) compares the density of predictive entropy between different methods trained on CIFAR-10 and tested on CIFAR-100. Because this is the semantic shift~\citep{tran2022plex}, we would expect the model to provide a high entropy value. Indeed, we observe that Density-Softmax achieves the highest entropy with a high-density value. In Fig.~\ref{fig:apd:results_entropy} in Apd.~\ref{apd:results_entropy}, we also observe that our model preserves low entropy with high-density value on \acrshort{IID} data, confirming the hypothesis that our framework can enhance uncertainty quantification. More details about the histograms are in Apd.~\ref{apd:results_entropy}.

\textbf{Density-Softmax outperforms \acrshort{SOTA} in inference speed.} Our method only requires one forward pass to make a prediction, so it outperforms other \acrshort{SOTA} in terms of inference speed. For every dataset with different backbones, we observe that Density-Softmax achieves almost the same latency with Deterministic~\acrshort{ERM}. In particular, it takes less than $525$ and $300$ ms/sample in Wide~Resnet-28-10 and Resnet-50 for an inference on RTX~A5000. To make a further comparison on test-time latency, we compare the running time on 3 modern GPU architectures in Fig.~\ref{fig:main}~(b). We observe that our model consistency outperforms \acrshort{SOTA}, especially for lower computational hardware like NVIDIA Tesla~K80.\\
Having a similar latency, the tables show our model is also more reliable than Deterministic~\acrshort{ERM} by always achieving a lower NLL, ECE, and higher accuracy. Therefore, these results suggest that Density-Softmax could be a potential deterministic approach for uncertainty and robustness in real-time applications.

\begin{figure*}[t!]
    \centering
    \setlength{\tabcolsep}{1pt}
    \begin{tabular}{cccc}
         \includegraphics[width=0.25\textwidth]{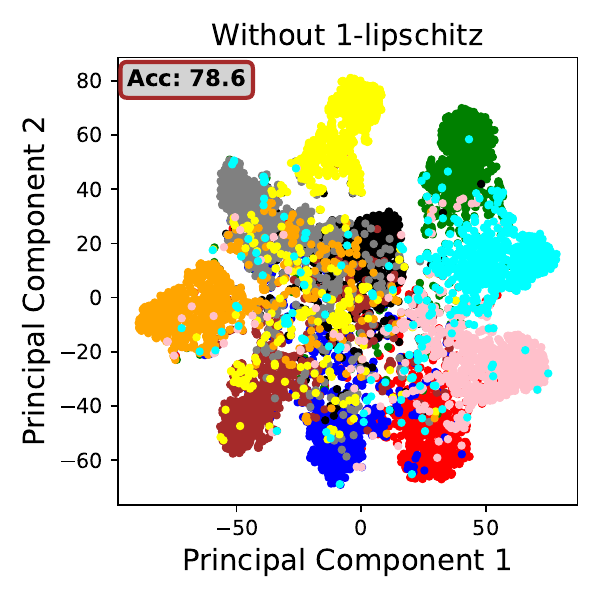}&
         \includegraphics[width=0.25\textwidth]{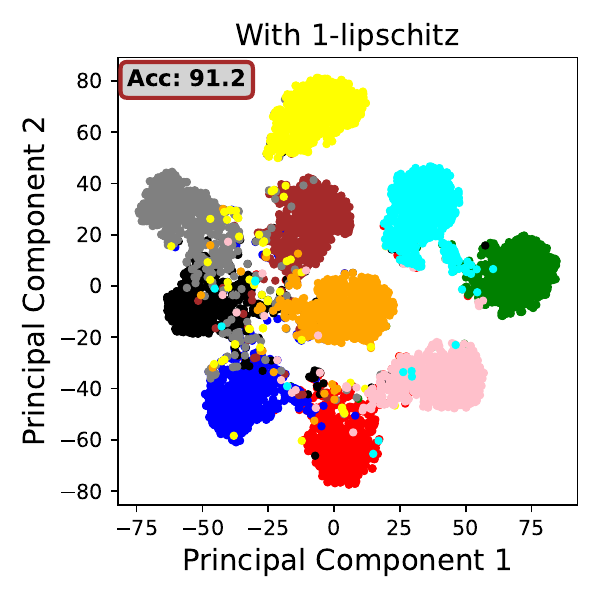}&
         \includegraphics[width=0.25\textwidth]{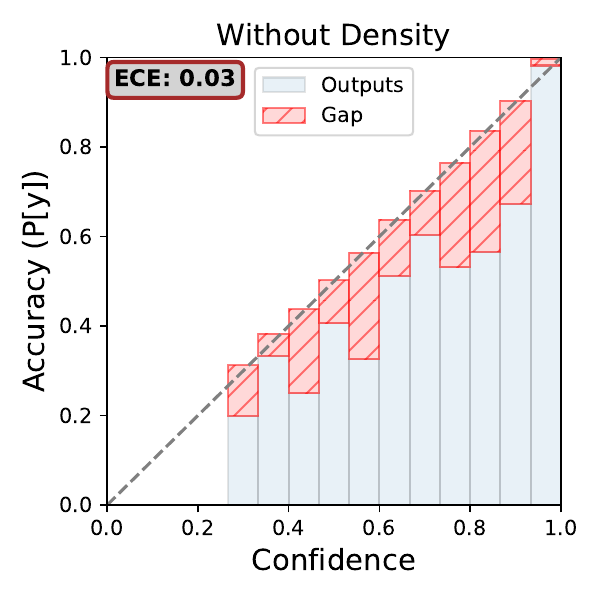}&
         \includegraphics[width=0.25\textwidth]{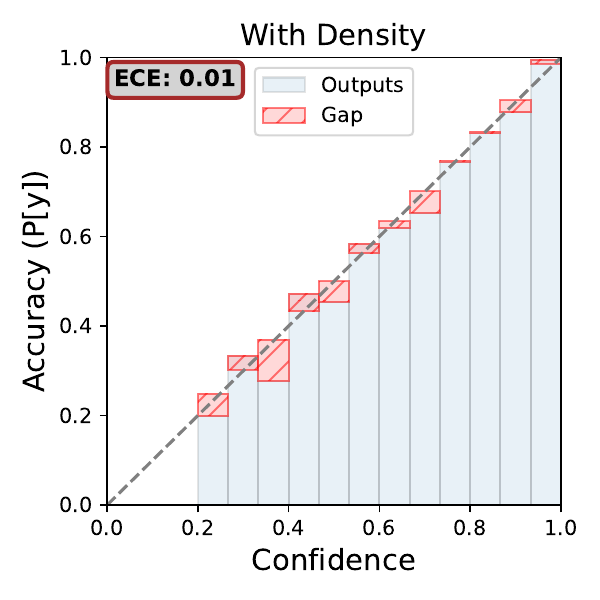}\\
         \small (a) & (b) & (c) & (d)
    \end{tabular}
    \vspace{-0.08in}
    \caption{Feature visualizations comparison between models with \& without 1-Lipschitz constraint on CIFAR-10-C (a \& b), reliability diagrams between models with \& without the density-function on CIFAR-10 (c \& d).}
    \vspace{-0.04in}
    \label{fig:ablation}
\end{figure*}

\textbf{Density-Softmax outperforms \acrshort{SOTA} in storage requirements.} From the tables, the parameters and latency of $p(Z;\alpha)$ can be measured by the minus of ours to Deterministic~\acrshort{ERM} by Re.~\ref{remark:efficiency}. We observe that Density-Softmax is very lightweight with less than $36.65$M parameters in Wide~Resnet-28-10 and $25.88$M in Resnet-50. These numbers are lower than other \acrshort{SOTA} baselines, e.g., Rank-1~\acrshort{BNN}, Heteroscedastic, \acrshort{SNGP}, \acrshort{MIMO}, and Deep Ensembles with Resnet-50 on ImagetNet.\\
In summary, combined with the latency performance, Density-Softmax outperforms other \acrshort{SOTA} approaches in computational efficiency at the test-time. In particular, our model is less than Deep Ensembles by 4 times in the number of parameters. Regarding the latency, Density-Softmax is also much faster than other \acrshort{SOTA} baselines across different hardware architectures.

\subsection{Ablation study: Analysis of Density Softmax's components}
\textbf{How does Density Softmax work?} Our framework is a combination of 2 main components: (1) the Lipschitz-constrained $f$ and (2) the density function $p(Z;\alpha)$. To test their importance, we make a comparison between architectures with and without each in our model. From Fig.~\ref{fig:ablation}, we observe that without the 1-Lipschitz regularization, the model has a worse feature representation than (1), e.g., leading to a drop in the accuracy from $91.2\%$ to $78.6\%$ on the CIFAR-10-C with defocus\_blur\_5. Similarly, without $p(Z;\alpha)$, the model has a worse uncertainty quality than (2), e.g., causing an increase in the \acrshort{ECE} from $0.01$ to $0.03$ on CIFAR-10. These results prove that the Lipschitz-constrained $f$ helps our model improve robustness, while the density function $p(Z;\alpha)$ enhances the uncertainty quantification. Last but not least, we discover that Lipschitz-constrained $f$ not only improves the robustness but also the uncertainty quality by observing a worse \acrshort{ECE} performance without the Lipschitz constraint, i.e., $\lambda = 0$ in Tab.~\ref{tab:lamda}.

\textbf{Our density model can capture distributional shifts.} Density-Softmax estimates the density on the low-dimensional feature space~$\mathcal{Z}$, which is fixed after the first step of Algorithm~\ref{alg:algorithm}. This feature structure is low-dimensional, task-specific, and encodes meaningful semantic features~\citep{bui2024density,charpentier2022natural}. Therefore, this feature space~$\mathcal{Z}$ is much simpler to estimate when compared to the complex image pixels space~$\mathcal{X}$. We visualize the likelihood histogram of our Normalizing-Flows density function across training, \acrshort{IID} testing, and \acrshort{OOD} sets in Fig.~\ref{fig:main}~(c). We observe that our density function provides a high likelihood for \acrshort{IID} while low values for \acrshort{OOD} test set. Importantly, when the shift intensity increases, the likelihood also decreases, showing that our model can reduce certainty correspondingly.
\section{Conclusion and Discussion}
Despite showing success in reliable \acrshort{DNN}, sampling-based methods like Deep Ensembles and \acrshort{BNN} suffer from huge computational burdens at test-time. To tackle this challenge, we introduce Density-Softmax, a sampling-free approach to improve uncertainty and robustness via a combination of a feature density function from the Lipschitz-constrained feature extractor with the softmax layer. We complement this algorithm with a theoretical analysis establishing guarantees on the 1-Lipschitz constraint, solution of the minimax uncertainty risk, distance awareness on feature space, and reducing over-confidence of the standard softmax. Empirically, we find our proposed framework achieves competitive results with \acrshort{SOTA} methods in uncertainty and robustness while outperforming them significantly in terms of memory and inference cost at test-time. We hope that our work will be an option for developers to try in real-world applications and inspire researchers to further progress in the area of improving the \acrshort{DNN} model efficiency and reliability.

\textbf{Density model performance in practice.} The uncertainty quality of our method depends on the density function. Our results show if the likelihood on test \acrshort{OOD} feature is lower than \acrshort{IID} set, then Density-Softmax can reduce the over-confidence of the standard softmax. Yet, it can be a risk that our model might not fully capture the real-world complexity by estimating density model is not always trivial in practice~\citep{nalisnick2019doDGM,charpentier2022natural}. 
    
\textbf{Training cost.} Despite showing success in test-time efficiency,
we also ask users to be cautious about the longer training-time of our Density-Softmax than Deterministic~\acrshort{ERM} in practice (details are in Apd.~\ref{apd:additional_ablation}).

\textbf{Remediation.} Given the aforementioned limitations, we encourage people who extend our work to: (1) proactively confront the model design and parameters to desired behaviors in real-world use cases; (2) be aware of the training challenge and prepare enough time and resources (e.g., setting enough GPU servers, training on GPU cloud services, etc.) to pre-train our framework in practice.

\textbf{Future work.} Future work includes providing uncertainty estimates for pre-trained large models (details are in Apd.~\ref{apd:additional_discuss}), developing new techniques to avoid computing the Jacobian matrix at training-time, improving estimation techniques to enhance the quality of the density function, and continuing to reduce the number of parameters to deploy this framework in real-world systems.  

\section*{Impact Statement}
Uncertainty \& robustness are critical problems in trustworthy AI. There has been growing interest in using sampling-based methods to ensure \acrshort{DNN} are robust and reliable. Challenges often arise when deploying such models in real-world applications. In this regard, Density-Softmax significantly improves test-time efficiency while preserving reliability. This could be particularly beneficial in high-stake applications (e.g., healthcare, finance, policy decision-making, etc.), where the trained model needs to be deployed and inference on low-resource hardware or real-time response software. 

\section*{Acknowledgments}
This work is partially supported by the JHU-Amazon AI2AI faculty award, the Discovery Award of the Johns Hopkins University, and a seed grant from JHU Institute of Assured Autonomy.

\bibliography{refs}
\bibliographystyle{icml2024}
\printglossary[type=\acronymtype,nonumberlist]
\printglossary

\appendix
\onecolumn
\noindent\rule{\textwidth}{1pt}
\section*{\Large\centering{Density-Softmax: Efficient Test-time Model for Uncertainty Estimation and Robustness under Distribution Shifts\\(Supplementary Material)}} 
\noindent\rule{\textwidth}{1pt}

\textbf{Reproducibility.} Our code inherits from \href{https://github.com/google/uncertainty-baselines/tree/main}{the uncertainty-baselines codebase}, so our reported results could be referred from this page. In Apd.~\ref{apd:exp}, we provide detailed information about our experiments, including baseline in Apd.~\ref{apd:baseline}, implementation in Apd.~\ref{apd:implementation}, and additional results in Apd.~\ref{apd:results}. The additional results in Apd.~\ref{apd:results} contain results for the toy dataset in Apd.~\ref{apd:results_toy}, results for the benchmark dataset in Apd.~\ref{apd:results_benchmark}, uncertainty details about calibration in Apd.~\ref{apd:results_calib}, and about predictive entropy in Apd.~\ref{apd:results_entropy}. In Apd.~\ref{apd:discussion}, we make further discussions about our method, including density estimation and likelihood value implementation in Apd.~\ref{apd:discussdensity}, distance preserving and 1-Lipschitz constraint in Apd.~\ref{apd:distance preserving}, additional ablation study about the gradient-penalty and training cost in Apd.~\ref{apd:additional_ablation}, and additional discussion about our method in Apdx.~\ref{apd:additional_discuss}. Finally, in Apd.~\ref{apd:proof}, we provide the proofs for all the results in the main paper, including proof of Thm.~\ref{theo:uniform_ood_normal_iid} in Apd.~\ref{proof:theo:uniform_ood_normal_iid}, proof of Thm.~\ref{thm:optimal} in Apd.~\ref{proof:thm:optimal}, proof of Thm.~\ref{thm:distanceaware} in Apd.~\ref{proof:thm:distanceaware}, and proof of Prop.~\ref{prop:ECE} in Apd.~\ref{proof:prop:ECE}. 
\section{Proofs}\label{apd:proof}
In this appendix, we provide proof of the theoretical results from the main paper.
\subsection{Proof of Theorem~\ref{theo:uniform_ood_normal_iid}}\label{proof:theo:uniform_ood_normal_iid}
The proof contains two parts. The first part shows Density-Softmax provides a uniform prediction when $d(z_{ood},Z_s)\rightarrow \infty$. The second part shows Density-Softmax preserves the in-domain prediction when $d(z_{iid},Z_s)\rightarrow 0$.

\textit{\underline{Part (1). Density-Softmax provides a uniform prediction when $d(z_{ood},Z_s)\rightarrow \infty$}}:

\begin{proof} Consider the non-uniform in predictive distribution $\sigma(g(f(x_{ood}))) \neq \mathbb{U}$, we have
\begin{align}
    p(y=i|x_{ood}) = \frac{\exp(z_{ood}^\top \theta_{g_i})}{\sum_{j=1}^K \exp(z_{ood}^\top \theta_{g_j})} \neq \frac{1}{K}, \forall i \in \mathcal{Y},
\end{align}
where $z_{ood}=f(x_{ood})$ is the latent presentation for test sample $x_{ood}$, $K$ is the total of the number of categorical in the discrete label space $\mathcal{Y}$.

Let us rewrite the Density-Softmax predictive distribution $\sigma(p(z_{ood};\alpha) \cdot g(z_{ood}))$ by
\begin{align}
p(y=i|x_{ood}) = \frac{\exp(p(z_{ood};\alpha) \cdot (z_{ood}^\top \theta_{g_i})}{\sum_{j=1}^K \exp(p(z_{ood};\alpha) \cdot (z_{ood}^\top \theta_{g_j}))}, \forall i \in \mathcal{Y}.
\end{align}

Using Lemma~\ref{lemma:density}, we have $\lim_{d(z_{t},Z_s) \rightarrow \infty} p(z_t;\alpha) \rightarrow 0$, i.e., if $d(z_{ood},Z_s) \rightarrow \infty$ then $p(z_{ood};\alpha) \rightarrow 0$. Therefore, we obtain
\begin{align}
\lim_{p(z_{ood}; \alpha) \rightarrow 0} \exp(p(z_{ood};\alpha) \cdot (z_{ood}^\top \theta_{g_i})) = e^{0} = 1, \forall i \in \mathcal{Y}.
\end{align}

Since $\exp(p(z_{ood};\alpha) \cdot (z_{ood}^\top \theta_{g_j})) = 1,\forall j \in \mathcal{Y}$ when $p(z_{ood}; \alpha) \rightarrow 0$, then
\begin{align}
p(y=i|x_{ood}) &= \frac{\exp(p(z_{ood};\alpha) \cdot (z_{ood}^\top \theta_{g_i}))}{\sum_{j=1}^K \exp(p(z_{ood};\alpha) \cdot (z_{ood}^\top \theta_{g_j}))} = \frac{1}{\sum_{i=1}^K 1} = \frac{1}{K}, \forall i \in \mathcal{Y}.
\end{align}
 
As a consequence, when $d(z_{ood},Z_s) \rightarrow \infty$, we obtain the conclusion: $\sigma(p(z_{ood};\alpha) \cdot g(z_{ood})) = \mathbb{U}$, where $\mathbb{U}$ stands for uniform distribution of Theorem~\ref{theo:uniform_ood_normal_iid}.

\textit{\underline{Part (2). Density-Softmax preserves the in-domain prediction when $d(z_{iid},Z_s)\rightarrow 0$}}:

Consider Density-Softmax predictive distribution $\sigma(p(z_{iid};\alpha) \cdot g(z_{iid}))$, we have
\begin{align}
p(y=i|x_{iid}) = \frac{\exp(p(z_{iid};\alpha) \cdot (z_{iid}^\top \theta_{g_i}))}{\sum_{j=1}^K \exp(p(z_{iid};\alpha) \cdot (z_{iid}^\top \theta_{g_j}))}, \forall i \in \mathcal{Y}.
\end{align}

Due to the likelihood value of $p(Z;\alpha)$ is scale in the range of $(0,1]$, we have $\lim_{d(z_t,Z_s)\rightarrow 0} p(z_t;\alpha) \rightarrow 1$, i.e., if $d(z_{iid},Z_s)\rightarrow 0$ then $p(z_{iid}; \alpha) \rightarrow 1$. Therefore, we obtain
\begin{align}
\lim_{p(z_{iid}; \alpha) \rightarrow 1} \exp(p(z_{iid};\alpha) \cdot (z_{iid}^\top \theta_{g_i})) = \exp(z_{iid}^\top \theta_{g_i}), \forall i \in \mathcal{Y}.
\end{align}

Since $\exp(p(z_{iid};\alpha) \cdot (z_{iid}^\top \theta_{g_i})) = \exp(z_{iid}^\top \theta_{g_i}),\forall i \in \mathcal{Y}$ when $p(z_{iid}; \alpha) \rightarrow 1$, then
\begin{align}
p(y=i|x_{iid}) = \frac{\exp(p(z_{iid};\alpha) \cdot (z_{iid}^\top \theta_{g_i}))}{\sum_{j=1}^K \exp(p(z_{iid};\alpha) \cdot (z_{iid}^\top \theta_{g_j}))} = \frac{\exp(z_{iid}^\top \theta_{g_i})}{\sum_{j=1}^K \exp(z_{iid}^\top \theta_{g_j})}, \forall i \in \mathcal{Y}.
\end{align}

As a consequence, when $d(z_{iid},Z_s) \rightarrow 0$, we obtain the conclusion: $\sigma(p(z_{iid};\alpha) \cdot g(z_{iid})) = \sigma(g(f(x_{iid})))$ of Theorem~\ref{theo:uniform_ood_normal_iid}.
\end{proof}

\subsection{Proof of Theorem~\ref{thm:optimal}}\label{proof:thm:optimal}
\begin{proof}
This proof is based on the following provable Lemma of~\citet{Liu2020SNGP}:
\begin{lemma}~\citep{Liu2020SNGP,Grunwald2004game}\label{proof:corol:lemma} \textbf{(The uniform distribution $\mathbb{U}$ is the optimal for minimax Bregman score in $x\notin \mathcal{X}_{iid}$).}
    Consider the Bregman score~\citep{parry2012proper} as follows
    \begin{align*}
        s(p,p^*|x)= \sum_{k=1}^K\left\{\left[p^*(y_k|x)-p(y_k|x)\right]\psi'(p^*(y_k|x))-\psi(p^*(y_k|x))\right\},
    \end{align*}
    where $\psi$ is a strictly concave and differentiable function. Bregman score reduces to the log score when $\psi(p)=p\log(p)$, and reduces to the Brier score when $\psi(p)=p^2-\frac{1}{K}$.
\end{lemma}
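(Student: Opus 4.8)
The plan is to split the statement into its two ``reduction'' claims and its substantive content, namely the minimax optimality of $\mathbb U$ announced in the title. For the reductions I would simply substitute into the displayed expression for $s(p,p^{*}\mid x)$: with $\psi(p)=p^{2}-\tfrac1K$ (so $\psi'(p)=2p$), after using the normalizations $\sum_k p_k=\sum_k p^{*}_k=1$ the sum collapses to $\sum_k (p^{*}_k-p_k)^{2}$ plus a forecast-independent constant, i.e.\ the Brier score up to an additive and multiplicative constant that do not change the $\arg\inf$ in Eq.~\ref{eq:minmaxuncertainty}; with $\psi(p)=p\log p$ (so $\psi'(p)=\log p+1$), cancelling the $p^{*}_k\log p^{*}_k$ terms and again using the normalizations collapses it, up to an additive constant, to $-\sum_k p_k\log p^{*}_k$, i.e.\ the log score. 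Each is a one-line computation that I would present inline.

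For the minimax statement I would invoke the Gr\"unwald--Dawid maximum-entropy/minimax machinery~\citep{Grunwald2004game}, specialized to an \emph{unconstrained} adversary. The ingredients are: (a) $s(q,q)=-\sum_k\psi(q_k)=:H(q)$ is the induced generalized entropy, and under the curvature hypothesis on $\psi$ (for the log- and Brier-score instances the map $q\mapsto-\sum_k\psi(q_k)$ is strictly concave) $H$ is strictly concave and permutation-symmetric on $\Delta_y$; (b) strict propriety of the Bregman score gives, for each fixed adversarial distribution $q$, that the forecast minimizing $s$ against $q$ is $q$ itself, with value $H(q)$; (c) out-of-domain the adversary is unconstrained, so its feasible set is all of $\Delta_y$ (the argument in fact only needs a permutation-symmetric compact convex set containing $\mathbb U$). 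Then I would exchange the inner $\inf$ over forecasts with the outer $\sup$ over $q$ by Sion's minimax theorem (legitimate since $\Delta_y$ is compact convex and $s$ is continuous, affine in one argument and strictly convex in the other), obtaining game value $\sup_{q\in\Delta_y}H(q)$; strict concavity plus symmetry of $H$ forces the unique maximizer $q=\mathbb U$ (equivalently, $\mathbb U$ is majorized by every element of $\Delta_y$ and $H$ is Schur-concave); and by (b) the forecast optimal against $q=\mathbb U$ is again $\mathbb U$. Hence $\mathbb U$ is the unique minimax forecast at every $x\notin\mathcal X_{iid}$, which is exactly the value of the $\sup\,\inf$ used in Corollary~\ref{corol:optimal}.

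I expect the main difficulty to be bookkeeping rather than any single estimate. I must fix once and for all which argument of $s(\cdot,\cdot\mid x)$ plays the role of the forecast and which the adversarial truth (the displayed formula makes the \emph{second} argument behave like the forecast, the opposite ordering to Eq.~\ref{eq:minmaxuncertainty}); state the hypothesis on $\psi$ in the form actually used, namely strict concavity of the induced entropy $H$ (equivalently $\psi''>0$ in the log/Brier cases); make explicit that forecast-independent additive constants in $s$ are irrelevant to the $\arg\inf$; and justify the $\inf$--$\sup$ interchange instead of asserting it. Once the setup is pinned down, strict propriety, the maximum-entropy characterization of $\mathbb U$, and Sion's theorem are all standard, so the remaining argument is short.
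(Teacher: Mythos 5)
The paper does not actually prove this lemma; it states it as an imported result, citing Liu et al.\ and Gr\"unwald--Dawid, and then uses it as a black box inside the proof of Corollary~\ref{corol:optimal}. Your sketch supplies the argument that the cited references give: propriety makes the inner value the generalized entropy $H(q)=-\sum_k\psi(q_k)$; concavity and permutation symmetry of $H$ on $\Delta_y$ force the unique maximizer $\mathbb{U}$; a minimax theorem exchanges $\inf$ and $\sup$; and the saddle is $(\mathbb{U},\mathbb{U})$. That is the standard route, and it is correct, provided you actually carry out the two repairs you flag.

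The first repair is small: $\psi$ must be taken strictly \emph{convex}, not concave as the lemma states, since both instances $\psi(p)=p\log p$ and $\psi(p)=p^2-1/K$ are convex, and it is convexity of $\psi$ that makes $H$ concave. Your "$\psi''>0$'' is the right hypothesis.

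The second repair is where "bookkeeping'' has real teeth, and I want to flag that the $\inf$--$\sup$ interchange is genuinely not cosmetic. As you observe, the displayed $s(p,p^*)$ puts the forecast in the \emph{second} slot: $s$ is affine in $p$ and, for the Bregman $\psi$'s in play, convex in $p^*$, and propriety reads $\inf_{p^*}s(p,p^*)=s(p,p)=H(p)$. Sion's theorem therefore applies to give $\inf_{p^*}\sup_{p}s=\sup_{p}\inf_{p^*}s$, which is the interchange your argument needs. It does \emph{not} give $\inf_{p}\sup_{p^*}s=\sup_{p^*}\inf_{p}s$, and in fact that equality fails: for the Brier instance with $K=2$ one finds $\inf_{p}\sup_{p^*}s=1$ while $\sup_{p^*}\inf_{p}s=\tfrac12$. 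So if you read the corollary's $\inf_{\mathbb{P}}\sup_{\mathbb{P}^*}$ with $\mathbb{P}$ as the \emph{first} argument of $s$, the interchange you plan to "justify'' is false; you must identify the predictive $\mathbb{P}$ with the \emph{second} argument of the Bregman formula, as the scoring-rule structure dictates. (The conclusion that $\mathbb{U}$ is the unique optimizer does hold for Brier under either reading, by a direct computation of $\sup_{p^*}s(p,p^*)=2-2\min_j p_j$, but your Sion-based argument is only valid under the second reading, and the log-score instance is degenerate under the first.) Once you commit to that reading, the remaining steps --- strict propriety, Schur-concavity and symmetry of $H$, and best-response at the saddle --- are exactly right and give a complete proof.
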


So, let consider the strictly proper scoring rules $S(\mathbb{P}(Y|X),\mathbb{P^*}(Y|X))$~\citep{gneiting2007strictlyproper, bröcker2009decomposeproper}, since $\mathbb{P}(Y|X)$ is predictive, and $\mathbb{P^*}(Y|X)$ is the data-generation distribution, then for $\mathcal{X}_{ood}=\mathcal{X}/\mathcal{X}_{idd}$, using the result from~\citet{Liu2020SNGP}, we have
\begin{align}
    S(\mathbb{P},\mathbb{P^*}) &= \mathbb{E}_{x \sim X}(s(p,p^*|x)) = \int_{\mathcal{X}}s(p,p^*|x)p^*(x)dx\\
    &=\int_{\mathcal{X}}s(p,p^*|x)\left[p^*(x|x\in \mathcal{X}_{iid})p^*(x\in \mathcal{X}_{iid})+p^*(x|x\in \mathcal{X}_{ood})p^*(x\in \mathcal{X}_{ood})\right]dx\\
    &=\underbrace{\mathbb{E}_{x \sim X_{iid}}(s(p,p^*|x))}_{S_{iid}(\mathbb{P},\mathbb{P^*})}p^*(x\in \mathcal{X}_{iid})+\underbrace{\mathbb{E}_{x \sim X_{ood}}(s(p,p^*|x))}_{S_{ood}(\mathbb{P},\mathbb{P^*})}p^*(x\in \mathcal{X}_{ood}).
\end{align}

Therefore, since $S_{iid}(\mathbb{P},\mathbb{P^*})$ and $S_{ood}(\mathbb{P},\mathbb{P^*})$ has disjoint support, we have the minimax uncertainty risk, i.e., $\inf_{\mathbb{P}(Y|X) \in \mathcal{P}} \left [ \sup_{\mathbb{P^*}(Y|X) \in \mathcal{P*}} S(\mathbb{P}(Y|X), \mathbb{P^*}(Y|X)) \right ]$ equivalents to
\begin{align}
\inf_{\mathbb{P}}\sup_{\mathbb{P}^*}S(\mathbb{P},\mathbb{P^*})&=\inf_{\mathbb{P}}\left[\sup_{\mathbb{P}^*}\left[S_{iid}(\mathbb{P},\mathbb{P^*})\right] \cdot \mathbb{P}^*(X_{iid})+\sup_{\mathbb{P}^*}\left[S_{ood}(\mathbb{P},\mathbb{P^*})\right] \cdot \mathbb{P}^*(X_{ood})\right]\\
&=\inf_{\mathbb{P}}\sup_{\mathbb{P}^*}\left[S_{iid}(\mathbb{P},\mathbb{P^*})\right] \cdot \mathbb{P}^*(X_{iid})+\inf_{\mathbb{P}}\sup_{\mathbb{P}^*}\left[S_{ood}(\mathbb{P},\mathbb{P^*})\right] \cdot \mathbb{P}^*(X_{ood}).
\end{align}

Due to $\mathbb{P}(Y|X_{iid})$ is the predictive distribution learned from data, we obtain $\inf_{\mathbb{P}}\sup_{\mathbb{P}^*}\left[S_{iid}(\mathbb{P},\mathbb{P^*})\right]$ is fixed and Density-Softmax satisfy by it is the model’s predictive distribution learned from \acrshort{IID} data. On the other hand, we have
\begin{align}
\sup_{\mathbb{P}^*\in \mathcal{P}^*}\left[S_{ood}(\mathbb{P},\mathbb{P^*})\right]=\mathbb{E}_{x\sim X_{ood}}\sup_{p^*}\left[s(p,p^*|x)\right]p(x).
\end{align}

Applying the result from Lemma~\ref{proof:corol:lemma} and combining with the result for \acrshort{OOD} data from Theorem~\ref{theo:uniform_ood_normal_iid} and Proof~\ref{proof:theo:uniform_ood_normal_iid}, we obtain
\begin{align}
\sigma(p(f(x_{ood});\alpha) \cdot g(f(x_{ood}))=\mathbb{U}=\underset{\mathbb{P}\in \mathcal{P}}{\arg\inf}\sup_{\mathbb{P}^*\in \mathcal{P}^*}\left[S_{ood}(\mathbb{P},\mathbb{P}^*)\right].
\end{align}

As a consequence, we obtain the conclusion: Density-Softmax's prediction is the optimal solution of the minimax uncertainty risk, i.e,
\begin{align}
    \sigma(p(f(X);\alpha) \cdot g(f(X))= \underset{\mathbb{P}(Y|X) \in \mathcal{P}}{\arg\inf} \left [ \sup_{\mathbb{P^*}(Y|X) \in \mathcal{P*}} S(\mathbb{P}(Y|X), \mathbb{P^*}(Y|X)) \right ]
\end{align}
of Theorem~\ref{thm:optimal}.
\end{proof}

\subsection{Proof of Theorem~\ref{thm:distanceaware}}\label{proof:thm:distanceaware}
\begin{proof}
The proof contains three parts. The first part shows density function $p(z_t;\alpha)$ is monotonically decreasing w.r.t. distance function $\mathbb{E}\left \| z_{t} - Z_s \right \|_{\mathcal{Z}}$. The second part shows the metric $u(x_t)$ is maximized if $p(z_t;\alpha) \rightarrow 0$. The third part shows $u(x_t)$ monotonically decreasing w.r.t. $p(z_t;\alpha)$ on the interval $\left ( 0, 1\right ]$.

\textit{\underline{Part (1). The monotonic decrease of density function $p(z_t;\alpha)$ w.r.t. distance function $\mathbb{E}\left \| z_{t} - Z_s \right \|_{\mathcal{Z}}$}}: Consider the probability density function $p(z_t;\alpha)$ follows Normalizing-Flows which output the Gaussian distribution with mean (median) $\mu$ and standard deviation $\sigma$, then we have
\begin{align}
p(z_t;\alpha) = \frac{1}{\sigma \sqrt{2\pi}} \exp\left(\frac{-1}{2}\left( \frac{z_t - \mu}{\sigma} \right)^2\right).
\end{align}

Take derivative, we obtain
\begin{align}\label{eq:proof:pdf}
    \frac{d}{d z_t}p(z_t;\alpha) &= \left[ \frac{-1}{2}\left( \frac{z_t - \mu}{\sigma} \right)^2\right]' p(z_t;\alpha)
    = \frac{\mu - z_t}{\sigma^2} p(z_t;\alpha) \Rightarrow
     \begin{cases} 
\frac{d}{d z_t}p(z_t;\alpha) > 0 &\text{ if } z_t < \mu,\\ 
\\
\frac{d}{d z_t}p(z_t;\alpha) = 0 &\text{ if } z_t = \mu,\\
\\
\frac{d}{d z_t}p(z_t;\alpha) < 0 &\text{ if } z_t > \mu.
    \end{cases}
\end{align}

Consider the distance function $\mathbb{E}\left \| z_{t} - Z_s \right \|_{\mathcal{Z}}$ follows the absolute norm, then we have
\begin{align}
    \mathbb{E}\left \| z_{t} - Z_s \right \|_{\mathcal{Z}} = \mathbb{E}\left( \left| z_{t} - Z_s \right| \right) = \int_{-\infty}^{z_t} \mathbb{P}(Z_s \leq t) dt + \int_{z_t}^{+\infty} \mathbb{P}(Z_s \geq t) dt.
\end{align}

Take derivative, we obtain
\begin{align}\label{eq:proof:norm}
    \frac{d}{d z_t}\mathbb{E}\left \| z_{t} - Z_s \right \|_{\mathcal{Z}} = \mathbb{P}(Z_s \leq z_{t}) - \mathbb{P}(Z_s \geq z_{t}) \Rightarrow
     \begin{cases} 
\frac{d}{d z_t}\mathbb{E}\left \| z_{t} - Z_s \right \|_{\mathcal{Z}} < 0 &\text{ if } z_t < \mu,\\ 
\\
\frac{d}{d z_t}\mathbb{E}\left \| z_{t} - Z_s \right \|_{\mathcal{Z}} = 0 &\text{ if } z_t = \mu,\\
\\
\frac{d}{d z_t}\mathbb{E}\left \| z_{t} - Z_s \right \|_{\mathcal{Z}} > 0 &\text{ if } z_t > \mu.
    \end{cases}
\end{align}

Combining the result in Eq.~\ref{eq:proof:pdf} and Eq.~\ref{eq:proof:norm}, we have $p(z_t;\alpha)$ is maximized when $\mathbb{E}\left \| z_{t} - Z_s \right \|_{\mathcal{Z}}$ is minimized at the median $\mu$, $p(z_t;\alpha)$ increase when $\mathbb{E}\left \| z_{t} - Z_s \right \|_{\mathcal{Z}}$ decrease and vice versa. As a consequence, we obtain $p(z_t;\alpha)$ is monotonically decreasing w.r.t. distance function $\mathbb{E}\left \| z_{t} - Z_s \right \|_{\mathcal{Z}}$.

\textit{\underline{Part (2). The maximum of metric $u(x_t)$}}: Consider $u(x_t)=v(d(x_t,X_s))$ in Def.~\ref{def:distanceaware}, let $u(x_t)$ is the entropy of predictive distribution of Density-Softmax $\sigma(p(z=f(x);\alpha) \cdot (g \circ f(x)))$, then we have
\begin{align}
     u(x_t) &= \mathrm{H}(\sigma(p(z_t;\alpha) \cdot g(z_t)))\\
     &= -\sum_{i=1}^K p(y=i|\sigma(p(z_t;\alpha) \cdot g(z_t))) \log\left( p(y=i|\sigma(p(z_t;\alpha) \cdot g(z_t)))\right)\\
     &= -\sum_{i=1}^K \frac{\exp(p(z_t;\alpha) \cdot (z_{t}^\top \theta_{g_i}))}{\sum_{j=1}^K \exp(p(z_t;\alpha) \cdot (z_{t}^\top \theta_{g_j}))} \log \left ( \frac{\exp(p(z_t;\alpha) \cdot (z_{t}^\top \theta_{g_i}))}{\sum_{j=1}^K \exp(p(z_t;\alpha) \cdot (z_{t}^\top \theta_{g_j}))} \right ).
\end{align}

Let $a_i =  \frac{\exp(p(z_t;\alpha) \cdot (z_{t}^\top \theta_{g_i}))}{\sum_{j=1}^K \exp(p(z_t;\alpha) \cdot (z_{t}^\top \theta_{g_j}))}$, then we need to find
\begin{align}
    (a_1,\ldots,a_K) \text{ to maximize } -\sum_{i=1}^K(a_i) \log(a_i) \text{ subject to } \sum_{i=1}^K(a_i) - 1 = 0.
\end{align}

Since $-\sum_{i=1}^K(a_i) \log(a_i)$ is an entropy function, it strictly concave on $\Vec{a}$. In addition, because the constraint is $\sum_{i=1}^K(a_i) - 1 = 0$, the Mangasarian-Fromovitz constraint qualification holds. So, apply the Lagrange multiplier, we have the Lagrange function
\begin{align}
    \mathcal{L}(a_1,\ldots,a_K,\lambda)=  -\sum_{i=1}^K(a_i) \log(a_i) - \lambda \left ( \sum_{i=1}^K(a_i) - 1 \right ).
\end{align}

Calculate the gradient, and we obtain
\begin{align}
    \nabla_{a_1,\ldots,a_K,\lambda}  \mathcal{L}(a_1,\ldots,a_K,\lambda) &= \left ( \frac{\partial \mathcal{L}}{\partial a_1}, \ldots, \frac{\partial \mathcal{L}}{\partial a_K}, \frac{\partial \mathcal{L}}{\partial \lambda} \right )\\
    &= \left ( -(\log(a_1) + \frac{1}{\ln}) - \lambda, \ldots, -(\log(a_k) + \frac{1}{\ln}) - \lambda, \sum_{i=1}^K(a_i) -1 \right ),
\end{align}
and therefore
\begin{align}
     \nabla_{a_1,\ldots,a_K,\lambda}  \mathcal{L}(a_1,\ldots,a_K,\lambda) = 0 \Leftrightarrow 
     \begin{cases} 
-(\log(a_i)+\frac{1}{\ln}) - \lambda = 0, \forall i \in \mathcal{Y},\\ 
\\ 
\sum_{i=1}^K(a_i) -1 = 0.
    \end{cases}
\end{align}

Consider $-(\log(a_i)+\frac{1}{\ln}) - \lambda = 0, \forall i \in \mathcal{Y}$, this shows that all $a_i$ are equal (because they depend on $\lambda$ only). By using $\sum_{i=1}^K(a_i) -1 = 0$, we find 
\begin{align}
    a_i = \frac{\exp(p(z_t;\alpha) \cdot (z_{t}^\top \theta_{g_i}))}{\sum_{j=1}^K \exp(p(z_t;\alpha) \cdot (z_{t}^\top \theta_{g_j}))} = \frac{1}{K}, \forall i \in \mathcal{Y}.
\end{align}

As a consequence, $u(x_t)$ is maximized if the predictive distribution $\sigma(p(z=f(x);\alpha) \cdot (g \circ f(x))) = \mathbb{U}$, i.e., $p(z_t;\alpha) \rightarrow 0$ which will happen if $z_t$ is \acrshort{OOD} data (by the result in Thm.~\ref{theo:uniform_ood_normal_iid} and Proof~\ref{proof:theo:uniform_ood_normal_iid}).

\textit{\underline{Part (3). The monotonically decrease of metric $u(x_t)$ on the interval $\left ( 0, 1\right ]$}}: Consider the function
\begin{align}
    \mathcal{F}(p(z_t;\alpha)) = -\sum_{i=1}^K \frac{\exp(p(z_t;\alpha) \cdot (z_{t}^\top \theta_{g_i}))}{\sum_{j=1}^K \exp(p(z_t;\alpha) \cdot (z_{t}^\top \theta_{g_j}))} \log \left ( \frac{\exp(p(z_t;\alpha) \cdot (z_{t}^\top \theta_{g_i}))}{\sum_{j=1}^K \exp(p(z_t;\alpha) \cdot (z_{t}^\top \theta_{g_j}))} \right ).
\end{align}

Let $a= p(z_t;\alpha)$, $b_i = z_{t}^\top \theta_{g_i}, \forall i \in \mathcal{Y}$ then  $\mathcal{F}(a) = -\sum_{i=1}^K \frac{e^{a b_i}}{\sum_{j=1}^K e^{a b_j}} \log \left ( \frac{e^{a b_i}}{\sum_{j=1}^K e^{a b_j}} \right )$, and we need to find $\frac{d}{d a}\mathcal{F}$. Take derivative, we obtain
\begin{align}
    \frac{d}{d a}\mathcal{F} &= -\sum_{i=1}^K \left\{ \left( \frac{e^{a b_i}}{\sum_{j=1}^K e^{a b_j}}\right)' \log \left ( \frac{e^{a b_i}}{\sum_{j=1}^K e^{a b_j}} \right ) + \frac{e^{a b_i}}{\sum_{j=1}^K e^{a b_j}} \left [\log \left ( \frac{e^{a b_i}}{\sum_{j=1}^K e^{a b_j}} \right ) \right]'\right\}\\
    &= -\sum_{i=1}^K \left[ \left( \frac{e^{a b_i}}{\sum_{j=1}^K e^{a b_j}}\right)' \log \left ( \frac{e^{a b_i}}{\sum_{j=1}^K e^{a b_j}} \right ) + \frac{e^{a b_i}}{\sum_{j=1}^K e^{a b_j}} \left ( \frac{e^{a b_i}}{\sum_{j=1}^K e^{a b_j}} \right )'  \frac{\sum_{j=1}^K e^{a b_j}}{e^{a b_i}}\right]\\
    &= -\sum_{i=1}^K \left\{ \frac{b_i e^{a b_i} \sum_{j=1}^K e^{a b_j} - e^{a b_i} \sum_{j=1}^K b_j e^{a b_j}}{(\sum_{j=1}^K e^{a b_j})^2} \left[\log \left ( \frac{e^{a b_i}}{\sum_{j=1}^K e^{a b_j}} \right ) + 1 \right]\right\}\\
    &= -\sum_{i=1}^K \left\{ \frac{\sum_{j=1}^K e^{a(b_i + b_j)} (b_i - b_j)}{(\sum_{j=1}^K e^{a b_j})^2} \left[\log \left ( \frac{e^{a b_i}}{\sum_{j=1}^K e^{a b_j}} \right ) + 1 \right]\right\}.
\end{align}

By assuming the non-uniform in the predictive distribution $\sigma(g(f(x_{ood}))) \neq \mathbb{U}$ and $a \in \left(0,1 \right]$, then
\begin{align}
    \frac{d}{d a}\mathcal{F} = -\sum_{i=1}^K \left\{ \frac{\sum_{j=1}^K e^{a(b_i + b_j)} (b_i - b_j)}{(\sum_{j=1}^K e^{a b_j})^2} \left[\log \left ( \frac{e^{a b_i}}{\sum_{j=1}^K e^{a b_j}} \right ) + 1 \right]\right\} < 0,
\end{align}
combining with $u(x_t)$ is maximized if $a \rightarrow 0$, we obtain $u(x_t)$ decrease monotonically on the interval $\left(0,1 \right]$.

Combining the result in \textit{Part (2).} $u(x_t)$ is maximized if $p(z_t;\alpha) \rightarrow 0$ which will happen if $z_t$ is \acrshort{OOD} data, and the result in \textit{Part (3).} $u(x_t)$ is decrease monotonically w.r.t. $p(z_t;\alpha)$ on the interval $\left ( 0, 1\right ]$ which will happen if $x_t$ is closer to \acrshort{IID} data since the likelihood value $p(z_t;\alpha)$ increases, we obtain the distance awareness of $p(z=f(x);\alpha) \cdot g$. 
 
Combining the result in \textit{Part (1).} $p(z_t;\alpha)$ is monotonically decreasing w.r.t. distance function $\mathbb{E}\left \| z_{t} - Z_s \right \|_{\mathcal{Z}}$ and the result \textit{distance awareness} of $p(z=f(x);\alpha) \cdot g$, we obtain the conclusion: $\sigma(p(z=f(x);\alpha) \cdot (g \circ f(x)))$ is distance aware on latent space $\mathcal{Z}$ of Theorem~\ref{thm:distanceaware}. 
\end{proof}

\subsection{Proof of Proposition~\ref{prop:ECE}}\label{proof:prop:ECE}
\begin{proof}
Let us consider the prediction of the standard softmax $\sigma(g(f(x)))$. By definition, we have
\begin{align}
    \sigma:\quad\quad \mathbb{R}^K &\rightarrow \Delta_y\\
    g(f(x)) &\mapsto \sigma(g(f(x))).
\end{align}

Let the logit vectors of $g(f(x))$ be $u=(u_1,\cdots,u_K)\in \mathbb{R}^K$, for an arbitrary pair of classes, i.e., $\forall i,j\in \{1,\cdots,K\}$ of the logit vector $u$, assume that $u_i<u_j$. Since the predictive distribution of Desnity-Softmax is $\sigma(p(f(x);\alpha) \cdot g(f(x)))$, we have the corresponding logit vector is
\begin{align}
    p(f(x);\alpha) \cdot u=(p(f(x);\alpha) \cdot u_1,\cdots,p(f(x);\alpha) \cdot u_K)\in \mathbb{R}^K.
\end{align}

Due to $p(f(x);\alpha) \in (0,1]$, then we obtain the following relationship holds
\begin{align}
    [p(f(x);\alpha) \cdot u]_i<[p(f(x);\alpha) \cdot u]_j,
\end{align}
where $[\cdot]_i$ represents the $i$-th entry of the vector. Since the order of entries in the logit vector is unchanged between the standard softmax and Density-Softmax, we obtain
\begin{align}
    \argmax_{y \in \mathcal{Y}}[\sigma(g(f(x)))]_y = \argmax_{y \in \mathcal{Y}}[\sigma(p(f(x);\alpha) \cdot g(f(x))]_y.
\end{align}

As a consequence, Density-Softmax preserves the accuracy of the standard softmax by
\begin{align}\label{prop:ECE:acc}
    \text{acc}(B_m) &= \frac{1}{\left | B_m \right |} \sum_{i \in B_m} \mathbb{I}(\argmax_{y \in \mathcal{Y}}[\sigma(g(f(x_i)))]_y = y_i)\\
    &= \frac{1}{\left | B_m \right |} \sum_{i \in B_m} \mathbb{I}(\argmax_{y \in \mathcal{Y}}[\sigma(p(f(x_i);\alpha) \cdot g(f(x_i)))]_y = y_i),
\end{align}
where $B_m$ is the set of sample indices whose confidence falls into $\left ( \frac{m-1}{M}, \frac{m}{M} \right ]$ in $M$ bins.

On the other hand, since $p(f(x);\alpha) \in (0,1]$, we also have
\begin{align}
    \max_{y \in \mathcal{Y}}\left[\sigma(p(f(x);\alpha) \cdot g(f(x)))\right]_y \leq \max_{y \in \mathcal{Y}}[\sigma(g(f(x)))]_y,
\end{align}
and this yields
\begin{align}\label{prop:ECE:conf}
    \frac{1}{\left | B_m \right |} \sum_{i \in B_m} \max_{y \in \mathcal{Y}}[\sigma(p(f(x_i);\alpha) 
 \cdot g(f(x_i)))]_y \leq 
    \frac{1}{\left | B_m \right |} \sum_{i \in B_m} \max_{y \in \mathcal{Y}}[\sigma(g(f(x_i)))]_y.
\end{align}

Furthermore, by assuming the predictive distribution of the standard softmax layer $\sigma(g(f(x)))$ is over-confident, i.e., $\text{acc}(B_m) \leq \text{conf}(B_m), \forall B_m$, then we have
\begin{align}\label{prop:over-conf}
    0\leq \frac{1}{\left | B_m \right |} \sum_{i \in B_m} \mathbb{I}(\argmax_{y \in \mathcal{Y}}[\sigma(g(z_i))]_y = y_i) \leq \frac{1}{\left | B_m \right |} \sum_{i \in B_m} \max_{y \in \mathcal{Y}}[\sigma(g(z_i))]_y, \forall B_m.
\end{align}

Combining with the result in~\ref{prop:ECE:acc}, in~\ref{prop:ECE:conf} and in~\ref{prop:ECE:conf} together, for $N$ number of samples, we obtain
\begin{align}
    &\underbrace{\sum_{m=1}^M \frac{\left | B_m \right |}{N} \left | \frac{1}{\left | B_m \right |} \sum_{i \in B_m} \mathbb{I}(\argmax_{y \in \mathcal{Y}}[\sigma(p(z_i;\alpha) \cdot g(z_i))]_y = y_i) - \frac{1}{\left | B_m \right |} \sum_{i \in B_m} \max_{y \in \mathcal{Y}}[\sigma(p(z_i;\alpha) 
 \cdot g(z_i))]_y\right |}_{\text{ECE}(\sigma((p(f;\alpha) \cdot g)\circ f))}\\ 
    &\leq \underbrace{\sum_{m=1}^M \frac{\left | B_m \right |}{N} \left | \frac{1}{\left | B_m \right |} \sum_{i \in B_m} \mathbb{I}(\argmax_{y \in \mathcal{Y}}[\sigma(g(z_i))]_y = y_i) - \frac{1}{\left | B_m \right |} \sum_{i \in B_m} \max_{y \in \mathcal{Y}}[\sigma(g(z_i))]_y\right |}_{\text{ECE}(\sigma(g\circ f))}, \text{ where }z_i=f(x_i).
\end{align}
As a consequence, we obtain the conclusion: $\text{ECE}(\sigma((p(f;\alpha) \cdot g)\circ f) \leq \text{ECE}(\sigma(g \circ f))$ of Proposition~\ref{prop:ECE}.
\end{proof}

\section{Further discussion and Ablation study}\label{apd:discussion}
\subsection{Density estimation and likelihood value}\label{apd:discussdensity}
Recall that the output in the inference step of Algorithm~\ref{alg:algorithm} has the form
\begin{equation}
    p(y=i|x_t) = \frac{\exp(p(z_t;\alpha) \cdot (z_t^\top \theta_{g_i}))}{\sum_{j=1}^K \exp(p(z_t;\alpha) \cdot (z_t^\top \theta_{g_j}))}, \forall i \in \mathcal{Y}.
\end{equation}
Let's consider: $\exp(p(z_t;\alpha) \cdot (z_t^\top \theta_{g_i}))$, if the likelihood value of $p(z_t;\alpha)$ is too large, then the output of the exponential function may go to a very large value, leading to the computer can not store to compute. Therefore, we need to find a scaling technique to avoid this issue.

Recall that for density estimation, we use Normalizing-Flows~\citep{dinh2017density,papamakarios2021normalizing}. Instead of returning likelihood value $p(Z;\alpha)$, this estimation returns the logarithm of likelihood $\log(p(Z;\alpha))$, then we need to take exponentially to get the likelihood value by
\begin{align}\label{eq:explogdensity}
    p(Z;\alpha) = e^{\log(p(Z;\alpha))}.
\end{align}
By doing so, there will be two properties for the range of likelihood value $p(Z;\alpha)$. First, value of $p(Z;\alpha) \neq 0$ by $\log(0)$ being undefined. Second, $p(Z;\alpha)$ is always positive by the output of the exponential function is always positive. As a result, we obtain the likelihood value $p(Z;\alpha)$ is in the range of $\left(0,+\infty\right]$.

To avoid the numerical issue when $p(Z;\alpha) \rightarrow +\infty$, we use the scaling technique to scale the range $\left(0,+\infty\right]$ to $\left(0,1\right]$ by the following formula
\begin{align}
    p(Z;\alpha) = \frac{p(Z;\alpha)}{\max(p(Z_{train};\alpha))}.
\end{align}
It is also worth noticing that there also can be a case $e^{\log(p(Z;\alpha))}$ in Equation~\ref{eq:explogdensity} goes to $+\infty$ if $\log(p(Z;\alpha))$ is too large, therefore, we also need to scale it to avoid the numerical issue. The pseudo-code of our scaling algorithm and inference process is presented in Algorithm~\ref{alg:scaling}.

\begin{figure}[t]
\begin{algorithm}[H]
   \caption{Scaling likelihood}
   \label{alg:scaling}
\begin{algorithmic}
   \STATE {\bfseries Scaling Input:} Training data $D_s$, encoder $f$, trained density function $p(Z;\alpha)$.\;
   \STATE max\_trainNLL $\leftarrow 0$
   \FOR{$e=1\rightarrow$ epochs}
   \STATE Sample $D_{B}$ with a mini-batch $B$ for source data $D_s$\;
   \STATE $Z = f(X \in D_{B})$\;
   \STATE batch\_trainNLL $\leftarrow p(Z;\alpha)$
   \IF{$\max(\text{batch\_trainNLL}) > $ max\_trainNLL}
   \STATE max\_trainNLL $\leftarrow \max(\text{batch\_trainNLL})$
   \ENDIF
   \ENDFOR
   \STATE {\bfseries Scaling Inference Input:} Test sample $x_t$\;
   \STATE $z_t = f(x_t)$\;
   \STATE $p(z_t;\alpha) = \frac{p(z_t;\alpha)}{\text{max\_trainNLL}}$
\end{algorithmic}
\end{algorithm}
\vspace{-0.2in}
\end{figure}

\subsection{Distance preserving and 1-Lipschitz constraint}\label{apd:distance preserving}
To improve the uncertainty quality, \citet{Liu2020SNGP} introduce distance awareness definition on sample space $\mathcal{X}$, which is stronger than our definition on feature space $\mathcal{Z}$. However, to make the model distance aware on $\mathcal{X}$, one necessary condition is $f(x)$ must be an isometric mapping, i.e., distance preserving on latent space $\mathcal{Z}$ by satisfy the bi-Lipschitz condition~\citep{searcod_metric_2006}
\begin{equation}\label{eq:bi-lipschitz}
    L_1 \cdot \left \| x_1 - x_2 \right \|_{\mathcal{X}} \leq
    \left \| f(x_1) - f(x_2) \right \|_{\mathcal{Z}}
    \leq
    L_2 \cdot \left \| x_1 - x_2 \right \|_{\mathcal{X}},
\end{equation} 
for positive and bounded constants $0 < L1 < 1 < L2$. Although our model only guarantees distance awareness on $\mathcal{Z}$ and does not hold on $\mathcal{X}$, it still aims to achieve the upper bound of Eq.~\ref{eq:bi-lipschitz} by the 1-Lipschitz constraint, i.e., $||f(x_1)-f(x_2)||_2\leq ||x_1 - x_2||_2$. This helps $f(x)$ assure that if the sample is similar, the feature will be similar as well, providing meaningful correspondence with the semantic properties of the input data $\mathcal{X}$. Therefore, the 1-Lipschitz not only helps to improve robustness but may also support to improve uncertainty performance. The empirical evidence is confirmed in Table~\ref{tab:lamda} by the calibrated uncertainty error without 1-Lipschitz constraint ($\lambda=0$) is significantly higher than with 1-Lipschitz constraint (e.g., $\lambda=1e-4$).

\subsection{Additional ablation study: gradient-penalty and training cost}\label{apd:additional_ablation}
\begin{table}[ht!]
    \centering
    \begin{tabular}{lcccccc}
        \toprule
        \textbf{$\lambda$} & \textbf{NLL($\downarrow$)} & \textbf{Acc($\uparrow$)} & \textbf{ECE($\downarrow$)} & \textbf{cNLL($\downarrow$)} & \textbf{cAcc($\uparrow$)} & \textbf{cECE($\downarrow$)}\\
        \midrule
        0 & 0.142 & 96.1 & 0.015 & 0.79 & 77.0 & 0.086\\
        1 (w/o Eq.~\ref{method:2nd-optimization}) & 0.165 & 94.8 & 0.017 & 0.66 & 79.9 & 0.041\\
        1 & 0.162 & 95.0 & 0.015 & 0.64 & 80.0 & 0.039\\
        1e-1 & 0.159 & 95.2 & 0.014 & 0.66 & 79.3 & 0.040\\
        1e-2 & 0.146 & 95.6 & 0.009 & 0.66 & 79.7 & 0.047\\
        1e-3 & 0.144 & 95.7 & 0.012 & 0.66 & 79.9 & 0.051\\
        1e-4 & 0.137 & 96.0 & 0.010 & 0.68 & 79.2 & 0.060\\
        \bottomrule
    \end{tabular}
    \caption{Performance of Density-Softmax with different gradient-penalty coefficient $\lambda$, model is trained on CIFAR-10, tested on \acrshort{IID} CIFAR-10 and \acrshort{OOD} CIFAR-10-C.}
    \vspace{-0.1in}
    \label{tab:lamda}
\end{table}
Tab.~\ref{tab:lamda} shows the performance of Density-Softmax across different values of the gradient-penalty hyper-parameter $\lambda$. Firstly, we observe there is a trade-off between the performance on \acrshort{IID} and \acrshort{OOD} data w.r.t. $\lambda$, i.e., the Lipschitz constraint may affect model performance on \acrshort{IID} and \acrshort{OOD} data. Secondly, the fine-tuning classifier step in Eq.~\ref{method:2nd-optimization} is essential by a better performance of $\lambda=1$ than $\lambda=0$ (w/o Eq.~\ref{method:2nd-optimization}). Finally, with an appropriate $\lambda=1e-4$, Density-Softmax can balance the performance between \acrshort{IID} and \acrshort{OOD} data. Notably, this gradient-penalty not only helps to improve the accuracy but also the uncertainty by a lower \acrshort{ECE} than $\lambda=0$, confirming the hypothesis that the 1-Lipschitz constraint supports improving uncertainty quality at the same time.

Despite showing success in test-time efficiency, we raise awareness about the challenge of training Density-Softmax on low computational infrastructure. Specifically, Density-Softmax requires a longer-time and higher-memory cost than Deterministic~\acrshort{ERM} at training-time (e.g., Fig.~\ref{fig:train_test_barchart}), due to 3 separate training steps and the Jacobian matrix in the regularization at the first step of Algorithm~\ref{alg:algorithm}. This 1-Lipschitz regularization also requires pre-defining the hyper-parameter $\lambda$ before training in implementation (e.g., Tab.~\ref{tab:lamda}). Additionally, the 3 training steps also require pre-defining additional optimizers, iterations, and learning rates in Tab.~\ref{tab:hyper-params}. That said, our training cost is still lower than other \acrshort{SOTA} techniques, e.g., Deep Ensembles, BatchEnsemble, \acrshort{MIMO}, Rank-1~\acrshort{BNN} in Fig.~\ref{fig:train_test_barchart} (left). Importantly, the efficient benefits of Density-Softmax are illustrated with a much lower inference cost than other \acrshort{SOTA} methods at test-time in Fig.~\ref{fig:train_test_barchart} (right).
\begin{figure}[ht!]
\begin{center}
  \includegraphics[width=1.0\linewidth]{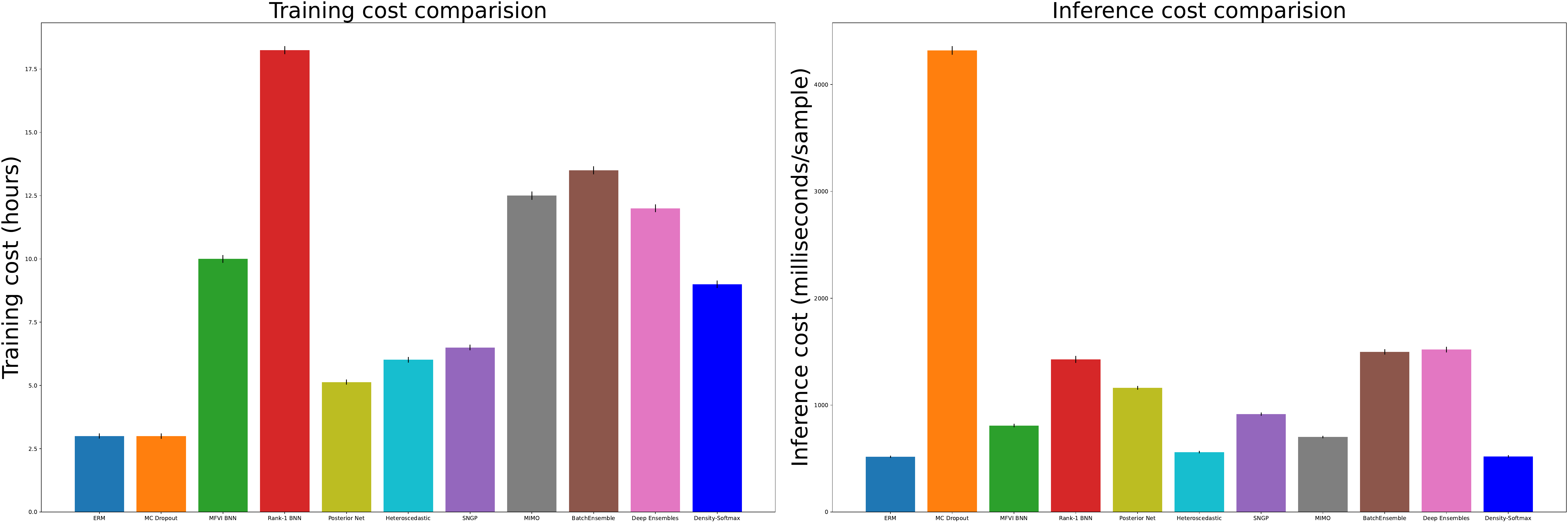}
\end{center}
\vspace{-0.1in}
\caption{Comparison in training cost (hours) on NVIDIA~A100 and inference cost (milliseconds/sample) on NVIDIA RTX~A5000, with error bars across 10 seeds of Wide~Resnet-28-10 on CIFAR-10 dataset. Density-Softmax outperforms \acrshort{SOTA} in inference speed and still has a lower training cost than some baselines.}
\vspace{-0.16in}
\label{fig:train_test_barchart}
\end{figure}

\subsection{Additional discussion about our method}\label{apd:additional_discuss}
\textbf{Lipschitz gradient penalty and the sufficient condition to 1-Lipschitz.} Since the feature extractor $f$ is a neural network, we can not verify whether $f$ is a 1-Lipschitz. However, there are two observations we want to discuss. Firstly, we note that our feature extractor $f$ satisfies $\sup_x||\nabla_x f(x)||_2=1$ in training data with the Rademacher Theorem~\ref{theo:radamacher}. Secondly, we tried the SeqLip~\citep{virmaux2018Lipschitz} to approximate the Lipschitz constant $L(f)$ by computing the upper bound of $L(f)$ with the Wide Renet 28-10 backbone on the CIFAR-10. We observe that the upper bounds of $L(f)$ are about $4.5\cdot 10^6$ without gradient-penalty and $2.8\cdot 10^6$ with gradient-penalty regularization. This does not show whether our true $L(f)$ is equal to 1 or not, but at least, it shows the gradient-penalty regularization can reduce the upper bound of $L(f)$.

\textbf{Potential to use with pre-trained large models.} Although we train the backbone from scratch, our method is flexible to be used to improve the uncertainty of a pre-trained model. Specifically, our training of the density function in Eq.~\ref{method:densityestimate} and our update classifier weight in Eq.~\ref{method:2nd-optimization} do not require backpropagation through the feature backbone $f$. Therefore, we can use these two training steps to fine-tune a pre-trained model to enhance uncertainty estimation, like Vision Transformer~\citep{dosovitskiy2021an}. We also show this improvement in Tab.~\ref{tab:lamda}, with $\lambda=0$, our method achieves $0.015$ in ECE and $0.086$ in cECE, while the pre-trained model, i.e., Deterministic~\acrshort{ERM} only achieves $0.023$ in ECE and $0.053$ in cECE in Tab.~\ref{tab:cifar10_results}.

\textbf{Comparison with \acrshort{OOD} detection related work.} Our method aims to improve calibration (both empirical and theoretical analysis of the calibration level w.r.t. \acrshort{ECE} in Prop.~\ref{prop:ECE}) but also can be used for \acrshort{OOD} detection in Sec.~\ref{experiments}, while \acrshort{OOD} detection methods~\citep{dosovitskiy2021an,lee2018asimple,sun2022OOD}, in contrast, are not explicitly designed to improve calibration. In terms of the test-time efficiency, these methods also suffer from a heavy computational burden. In particular, \citet{lee2018asimple} calculates expensive invertible covariance matrixes whose size depends on the feature dimension and the class number for multiple \acrshort{DNN} layers, additionally the cost of the derivative w.r.t. the input sample. Meanwhile, \citet{dosovitskiy2021an,sun2022OOD} are non-parametric methods whose computational complexity at test-time depends on the number of training samples. In contrast, our model estimates a marginal lightweight density function and is a parametric method, hence, the complexity at test-time does not depend on the size of the training set.

\section{Additional experiments}\label{apd:exp}
\subsection{Baseline details}\label{apd:baseline}
This appendix provides an exhaustive literature review of 14 \acrshort{SOTA} related methods which are used to make comparisons with our model by using the uncertainty-baselines~\citep{nado2021uncertainty} (We exclude Mixup~\citep{zhang2018mixup,carratino2022mixup} and Augmix~\citep{hendrycks*2020augmix} by we do not use data-augmentation):
\begin{itemize}
    \item \textbf{Deterministic~\acrshort{ERM}~\citep{vapnik1998erm}} is a standard deterministic model. In test-time, it predicts the label immediately by a single forward pass.
    \item \textbf{\acrshort{MC}~Dropout~\citep{gal2016mcdropout}} includes dropout regularization method in the model. In test-time, it uses \acrshort{MC} sampling by dropout to make different predictions, then get the mean of the list prediction.
    \item \textbf{\acrshort{MFVI}~\acrshort{BNN}~\citep{wen2018flipout}} uses the \acrshort{BNN} by putting distribution over the weight by mean and variance per each weight. Because each weight consists of mean and variance, the total model weights will double as the Deterministic~\acrshort{ERM} model.
    \item \textbf{Rank-1~\acrshort{BNN}~\citep{dusenberry2020rank1}} use the mixture approximate posterior to benefits from multimodal representation with local uncertainty around each mode, compute posterior on rank-1 matrix while keeping weight is deterministic. In inference time, sampling weight distribution by \acrshort{MC} to make different predictions, then get the mean of the list prediction.
    \item \textbf{Posterior~Net~\citep{charpentier2020postnet}} is based on Evidential Deep Learning~\citep{sensoy2018evidential} with a latent density function by utilizing conditional densities per class, intuitively acting as class conditionals on the latent space. Due to belonging to the Bayesian perspective, it needs to select a "good" Prior distribution, which is difficult in practice.
    \item \textbf{Heteroscedastic~\citep{collier2021correlated}} is a probabilistic approach to modeling input-dependent by placing a multivariate Normal distributed latent variable on the final hidden layer of a neural network classifier. In test-time, it uses \acrshort{MC} sampling to make different predictions, then get the mean of the list prediction.
    \item \textbf{\acrshort{SNGP}~\citep{Liu2020SNGP}} is a combination of the last Gaussian Process layer with 
    Spectral Normalization to the hidden layers. Because using the Spectral Normalization for every weight in each residual layer with the power iteration method~\citep{gouk2021regularisation,miyato2018spectral}, and Gaussian Process layer with \acrshort{MC} sampling, the weight and latency of \acrshort{SNGP} is considerably higher than the Deterministic~\acrshort{ERM} model. 
    \item \textbf{\acrshort{MIMO}}~\citep{havasi2021training}, i.e., multi-input multi-output, it trains multiple independent subnetworks within a network. In test-time, it performs multiple independent predictions in a single forward pass.
    \item \textbf{\acrshort{DUQ}~\citep{van2020uncertainty}} is a deterministic-based framework by using kernel distance with Radial Basis Function to improve \acrshort{OOD} detection ability. However, it has been shown often to have a poor performance in uncertainty and robustness~\citep{Liu2020SNGP,nado2021uncertainty}. Regarding scalability, this requires storing and computing a weight matrix with the size depending quadratically on latent dimension and linearly in the number of classes, leading to high computational demands across modern \acrshort{DNN} architectures at inference time.
    \item \textbf{\acrshort{DDU}~\citep{mukhoti2022deep}} is another deterministic-based method, which is a combination of \acrshort{ERM} with a Gaussian Mixture Models (GMM) density function to detect OOD samples. However, this \acrshort{OOD} detector does not contribute to the predictive distribution of the softmax directly. As a result, this method needs to apply the post-hoc re-calibration technique to improve calibration performance. Meanwhile, we focus on methods that do not require additional calibration data and post-hoc calibration steps, i.e., without Temperature Scaling (w/o TS).
    \item \textbf{\acrshort{DUE}~\citep{vanamersfoort2022feature}} is an extension version of \acrshort{SNGP} by constrain Deep Kernel Learning’s feature extractor to approximately preserve distances through a bi-Lipschitz constraint.
    \item \textbf{\acrshort{NatPN}~\citep{charpentier2022natural}} is based on Posterior~Net and is the closest to our work by also estimating the density function on the marginal feature space. Yet, we have two main differences: (1) From the Bayesian view, optimizing with \acrshort{NatPN} is equivalent to doing Maximum a Posteriori estimation while ours is equivalent to doing \acrshort{MLE}. (2) The predictive distribution in \acrshort{NatPN} is not a standard softmax output because it is not normalized by a natural exponent function. Meanwhile, Density-Softmax is normalized by a natural exponent function. So, we have the following benefits: (1) Density-Softmax can calculate the predictive distribution without the need to consider how to define prior parameters. The prior can hurt the posterior performance in \acrshort{NatPN} if we pre-define a bad prior.  (2) The natural exponential helps Density-Softmax easily optimize with the cross-entropy loss by the nice derivative property of $\ln(\exp(a)) = a$~\citep{GoodBengCour16}, and produce a sharper distribution with a higher probability of the largest entry in the logit vector and lower probabilities of the smaller entries when compared with the normalization of \acrshort{NatPN}.
    \item \textbf{BatchEnsemble~\citep{Wen2020BatchEnsemble:}} defines each weight matrix to be the Hadamard product of a shared weight among ensemble members and a rank-1 matrix per member. In test-time, the final prediction is calculated from the mean of the list prediction of the ensemble.
    \item \textbf{Deep Ensembles~\citep{lakshminarayanan2017ensemble}} includes multiple Deterministic~\acrshort{ERM} trained with different seeds. In test-time, the final prediction is calculated from the mean of the list ensemble predictions. Due to aggregates from multiple models, the total of model weights needed to store will increase linearly w.r.t. the number of models.
\end{itemize}

\subsection{Implementation details}\label{apd:implementation}
Based on the code of~\citet{nado2021uncertainty}, we use similar settings with their Deterministic~\acrshort{ERM} for everything for a fair comparison on the benchmark dataset. For the setting on the toy dataset, we follow the settings of~\citet{Liu2020SNGP}.

\textbf{Datasets.} We utilize 6 commonly used datasets under distributional shifts~\citep{nado2021uncertainty}, including Toy dataset~\citep{Liu2020SNGP} with two moons and two ovals to visualize uncertainty and clustering. CIFAR-10-C, CIFAR-100-C, and ImageNet-C~\citep{hendrycks2018benchmarking} for the main benchmarking. To evaluate real-world shifts, we experiment on SVHN~\citep{netzer2011SVHN} and CIFAR-10.1~\citep{recht2018cifar10.1}.

\textbf{Evaluation metrics.} To evaluate the generalization, we use \acrshort{NLL} and Accuracy. For uncertainty estimation, we visualize uncertainty surfaces (entropy, different predictive variances), predictive entropy, \acrshort{AUPR}/\acrshort{AUROC} for \acrshort{OOD} detection, and \acrshort{ECE} with $15$ bins for calibration. To compare the robustness under distributional shifts, we evaluate every \acrshort{OOD} set in each dataset. To compare computational efficiency, we count the number of model parameters for storage requirements and measure latency in milliseconds per sample \textbf{at test-time}.

\textbf{Training details.} We train models on the train set excluding data augmentation~\citep{hendrycks*2020augmix,zhang2018mixup}, test on the original \acrshort{IID} test, and aforementioned \acrshort{OOD} sets. The performance is evaluated on backbones ResFFN-12-128~\citep{Liu2020SNGP} for the Toy, Wide~Resnet-28-10~\citep{zagoruyko2016wideresnet} for CIFAR-10-100, and Resnet-50~\citep{he2016resnet} for ImageNet. We only use normalization to process images for the benchmark datasets. Specifically, for CIFAR-10 and CIFAR-100, we normalize by the mean is $[0.4914, 0.4822, 0.4465]$ and standard deviation is $[0.2470, 0.2435, 0.2616]$. For ImageNet, we normalize by the mean is $[0.485, 0.456, 0.406]$ and standard deviation is $[0.229, 0.224, 0.225]$ (note that we do not perform augmentation techniques of~\citet{hendrycks*2020augmix} and~\citet{zhang2018mixup}).

\textbf{Architectures and hyper-parameters.} We list the detailed value of hyper-parameters used for each dataset in Table~\ref{tab:hyper-params} and the architecture of Normalizing-Flows in Table~\ref{tab:flows}. For a fair comparison with deterministic, we always use the same hyper-parameters as the Deterministic~\acrshort{ERM}.

\begin{table}[t!]
\begin{minipage}{1.0\textwidth}
\caption{Condition dataset, hyper-parameters, and their default values in our experiments. Settings are inherited from~\citet{Liu2020SNGP} for the toy dataset and from~\citet{nado2021uncertainty} for CIFAR-10-100 and ImageNet. Note that we always use the same hyper-parameters as the Deterministic~\acrshort{ERM} for a fair comparison.}
\label{tab:hyper-params}
\centering
\scalebox{0.9}{
\begin{tabular}{ccc}
\toprule
\textbf{Conditions} & \textbf{Hyper-parameters} & \textbf{Default value} \\
\midrule
\multirow{7}{*}{Toy dataset}
  & backbone & ResFFN-12-128~\citep{Liu2020SNGP}\\
  & epochs & 100\\ 
  & batch size & 128\\
  & optimizer & Adam~\citep{adam}\\
  & learning rate & 1e-4\\ 
  & density estimation epochs (Flows) & 300\\
  & density optimizer (Flows) & Adam~\citep{adam}\\
  & density learning rate (Flows) & 1e-4\\
  & re-optimize classifier epochs & 1\\
  & gradient-penalty coefficient & 1e-2\\
\midrule
\multirow{13}{*}{CIFAR-10-100}
  & backbone & Wide~Resnet-28-10~\citep{zagoruyko2016wideresnet}\\
  & epochs & 200\\ 
  & checkpoint interval & 25\\
  & batch size & 64\\
  & optimizer & SGD(momentum = 0.9, nesterov = True)\\
  & learning rate & 0.05\\ 
  & lr decay epochs & [60, 120, 160]\\
  & lr decay ratio & 0.2\\
  & L2 regularization coefficient & 2e-4\\
  & scale range & -[0.55-0.4, 0.4-0.3]\\
  & density estimation epochs & 50\\
  & density optimizer & Adam~\citep{adam}\\
  & density learning rate & 1e-4\\
  & re-optimize classifier epochs & 10\\
  & gradient-penalty coefficient & 1e-4, 1e-5 (CIFAR-10, CIFAR-100)\\
\midrule
\multirow{13}{*}{ImageNet} 
  & backbone & Resnet-50~\citep{he2016resnet}\\
  & epochs & 90\\ 
  & checkpoint interval & 25\\
  & batch size & 64\\
  & optimizer & SGD(momentum = 0.9, nesterov = True)\\
  & learning rate & 0.05\\ 
  & lr decay epochs & [30, 60, 80]\\
  & lr decay ratio & 0.1\\
  & L2 regularization coefficient & 1e-4\\
  & scale range & -[0.4, 0.3]\\
  & density estimation epochs & 10\\
  & density optimizer & Adam~\citep{adam}\\
  & density learning rate & 1e-4\\
  & re-optimize classifier epochs & 2\\
  & gradient-penalty coefficient & 1e-1\\
\bottomrule
\end{tabular}}
\end{minipage}
\begin{minipage}{1.0\textwidth}
\vspace{0.2in}
\caption{Two Coupling structures-(a) and (b) for Normalizing-Flows~\citep{dinh2017density,papamakarios2021normalizing} on the Toy, CIFAR-10, CIFAR-100, and ImageNet. Input: $Z \in \mathbb{R}^{d_z}$, where $d_z = 128$ for ResFFN-12-128, $d_z = 640$ for Wide~Resnet-28-10, and $d_z=2048$ for Resnet-50.}
\label{tab:flows}
\scalebox{0.9}{
\begin{tabular}{cc}
    \begin{minipage}{0.55\linewidth}
    \centering
        \begin{tabular}{c}
        \toprule
            (a) $S_{\theta_S} $\\
            \midrule
            Input: $Z \in \mathbb{R}^{d_z}$\\
            \midrule
            Dense (units = 16, regularizers l2 = 0.01, ReLU)\\
            \midrule
            Dense (units = 16, regularizers l2 = 0.01, ReLU)\\
            \midrule
            Dense (units = 16, regularizers l2 = 0.01, ReLU)\\
            \midrule
            Dense (units = 16, regularizers l2 = 0.01, ReLU)\\
            \midrule
            Dense (units = 16, regularizers l2 = 0.01, Tanh)\\
            \bottomrule
        \end{tabular}
    \end{minipage} 
    &
    \begin{minipage}{0.55\linewidth}
    \centering
        \begin{tabular}{c}
        \toprule
            (b) $T_{\theta_T} $\\
           \midrule
            Input: $Z \in \mathbb{R}^{d_z}$\\
            \midrule
            Dense (units = 16, regularizers l2 = 0.01, ReLU)\\
            \midrule
            Dense (units = 16, regularizers l2 = 0.01, ReLU)\\
            \midrule
            Dense (units = 16, regularizers l2 = 0.01, ReLU)\\
            \midrule
            Dense (units = 16, regularizers l2 = 0.01, ReLU)\\
            \midrule
            Dense (units = 16, regularizers l2 = 0.01, Linear)\\
            \bottomrule
        \end{tabular}
    \end{minipage} 
\end{tabular}}
\end{minipage}
\end{table}

\textbf{Source code and computing system.}
Our source code includes the notebook demo on the toy dataset, scripts to download the benchmark dataset, setup for environment configuration, and our provided code (detail in README.md). We train our model on two single GPUs: NVIDIA A100-PCIE-40GB with 8 CPUs: Intel(R) Xeon(R) Gold 6248R CPU @ 3.00GHz with 8GB RAM per each, and require 100GB available disk space for storage. We test our model on three different settings, including (1) a single GPU: NVIDIA Tesla~K80 accelerator-12GB GDDR5 VRAM with 8-CPUs: Intel(R) Xeon(R) Gold 6248R CPU @ 3.00GHz with 8GB RAM per each; (2) a single GPU: NVIDIA RTX~A5000-24564MiB with 8-CPUs: AMD Ryzen Threadripper 3960X 24-Core with 8GB RAM per each; and (3) a single GPU: NVIDIA A100-PCIE-40GB with 8 CPUs: Intel(R) Xeon(R) Gold 6248R CPU @ 3.00GHz with 8GB RAM per each.

\textbf{Demo notebook code for Algorithm~\ref{alg:algorithm}}\label{apd:code}

\begin{tabular}{cc}
\begin{minipage}{0.1\linewidth}
\end{minipage}&
\begin{minipage}{0.96\linewidth}
\begin{minted}
[
frame=lines,
framesep=2mm,
baselinestretch=1,
bgcolor=LightGray,
fontsize=\fontsize{8.2pt}{8.2pt},
linenos
]
{python}
import tensorflow as tf

#Define a features extractor f.
encoder = tf.keras.Sequential([
	tf.keras.layers.Dense(100, activation = "relu"),
	tf.keras.layers.Dense(100, activation = "relu"),
])
#Define a classifier g.
classifier = tf.keras.layers.Dense(10)

#Define a tf step function to pre-train model w.r.t. Eq. 4.
@tf.function
def pre_train_step(x, y):
    with tf.GradientTape(persistent = True) as tape:
        tape.watch(x)
        features = encoder(x)
        logits = classifier(features)
        loss_value = tf.cast(loss_fn(y, logits), tf.float64)
        grad_norm = tf.sqrt(tf.reduce_sum(tape.batch_jacobian(features, x)**2, axis = [1, 2]))
        loss_value += (grad_lambda * tf.reduce_mean(((grad_norm - 1)**2)))
        
    list_weights = encoder.trainable_weights + classifier.trainable_weights
    grads = tape.gradient(loss_value, list_weights)
    optimizer.apply_gradients(zip(grads, list_weights))
    return loss_value

#Define a tf step function to re-update the classifier by feature density model w.r.t. Eq. 6.
@tf.function
def train_step(x, y, flow_model, train_likelihood_max):
    with tf.GradientTape() as tape:
        features = encoder(x)
        likelihood = tf.exp(flow_model.score_samples(features))
        likelihood = (tf.expand_dims(likelihood, 1))/(train_likelihood_max)
        logits = classifier(features) * tf.cast(likelihood, dtype=tf.float32)
        loss_value = loss_fn(y, logits)

    grads = tape.gradient(loss_value, classifier.trainable_weights)
    optimizer.apply_gradients(zip(grads, classifier.trainable_weights))
    return loss_value

#Define a tf step function to make inference w.r.t. Eq. 7.
@tf.function
def test_step(x, flow_model, train_likelihood_max):
    features = encoder(x)
    likelihood = tf.exp(flow_model.score_samples(features))
    likelihood = (tf.expand_dims(likelihood, 1))/(train_likelihood_max)
    logits = classifier(features) * tf.cast(likelihood, dtype=tf.float32)
    y_pred = tf.nn.softmax(logits)
    return y_pred
\end{minted}
\end{minipage}
\end{tabular}

\subsection{Empirical result details}\label{apd:results}
\subsubsection{Distance awareness on the toy dataset}\label{apd:results_toy}
\begin{figure}[ht!]
\begin{center}
  \includegraphics[width=1.0\linewidth]{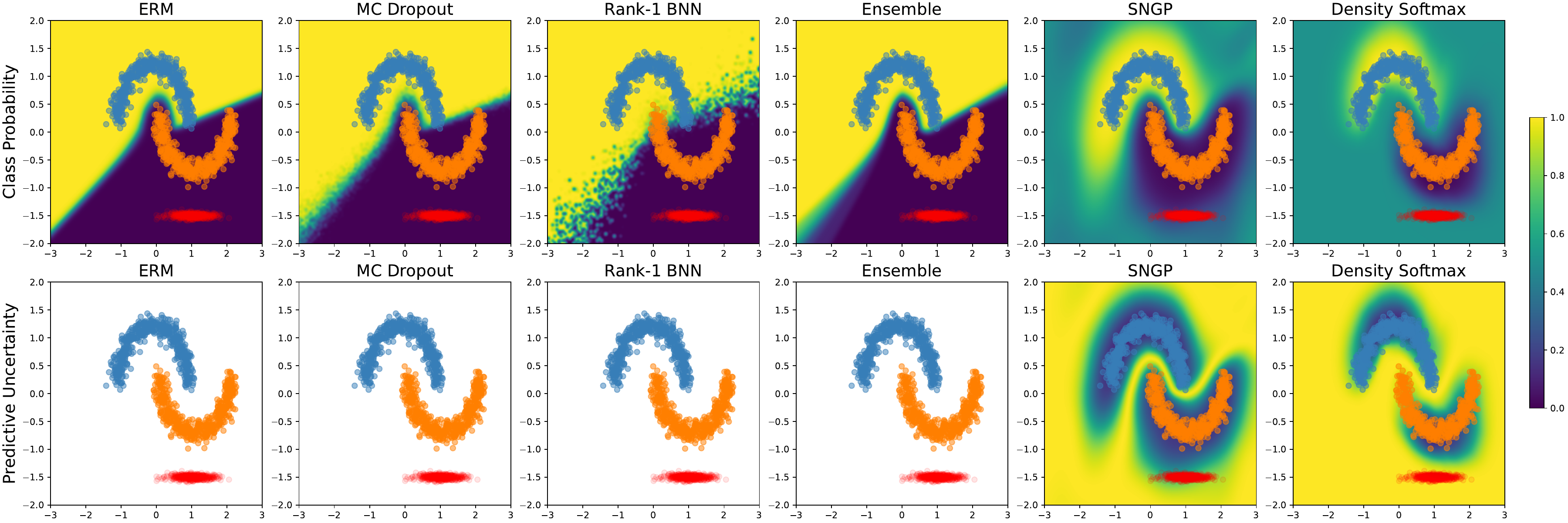}
\end{center}
    \vspace{-0.2in}
  \caption{The class probability $p(y|x)$ (Top Row) and predictive entropy $H(Y|X=x) = -p(y|x) \cdot \log_2(p(y|x)) - (1-p(y|x)) \cdot \log_2((1-p(y|x)))$ surface (Bottom Row) between different approaches. Density-Softmax achieves distance awareness with a uniform class probability and high entropy value on \textcolor{red}{OOD data}. Note that the white background due to $\log(0)$ is undefined.}
  \vspace{-0.1in}
\label{fig:2dmoon-entropy}
\end{figure}
\textbf{Different uncertainty metrics.} Figure~\ref{fig:2dmoon-entropy} presents a comparison in terms of predictive entropy. Because of the over-confidence of Deterministic~\acrshort{ERM}, \acrshort{MC}~Dropout, Rank-1~\acrshort{BNN}, and Ensemble which return $p(y|x) = 0$, theirs predictive entropy become undefined by $\log_2(p(y|x)) = \log_2(0)$ is undefined. As a result, we have a white background for these mentioned baselines, showing these methods are not able to be aware of the distance \acrshort{OOD} dataset. For \acrshort{MC}~Dropout and Ensemble, since
these two methods do not provide a convenient expression of the predictive variance of Bernoulli distribution, we also follow the work of~\citet{Liu2020SNGP} to plot their predictive uncertainty as the distance of the maximum predictive probability from $0.5$ so that $u(x) \in \left[0,1\right]$ in Figure~\ref{fig:2dmoon-var2}. Although with different uncertainty metrics, Deterministic~\acrshort{ERM}, \acrshort{MC}~Dropout, Rank-1~\acrshort{BNN}, and Ensemble still perform badly, showing truly unable to have the distance-aware ability.

\begin{figure}[ht!]
\begin{center}
  \includegraphics[width=1.0\linewidth]{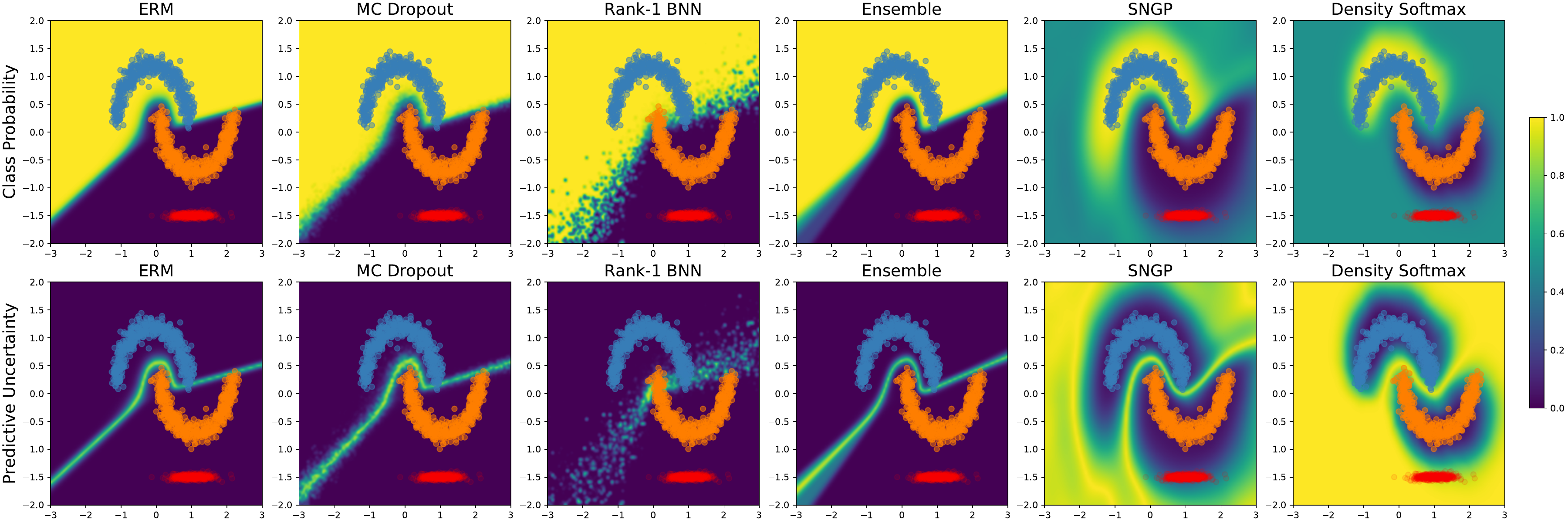}
\end{center}
    \vspace{-0.2in}
  \caption{The class probability $p(y|x)$ and predictive uncertainty $u(x) = 1 - 2 \cdot \left | p(y|x) - 0.5 \right |$ surface between different approaches.}
  \vspace{-0.2in}
\label{fig:2dmoon-var2}
\end{figure}

\textbf{Different toy dataset.} To continue to verify a consistency observation across different empirical settings, we illustrate distance-aware ability by predictive variance surface on the second toy dataset. Figure~\ref{fig:2doval} compares the methods as mentioned earlier on two ovals datasets. Similar to previous settings, only \acrshort{SNGP} can show its distance-aware uncertainty. However, for around half of \acrshort{OOD} data points, its predictive uncertainty is still $0.0$, showing still has limitations in distance-aware ability. In contrast, our Density-Softmax performs well, with $0.0$ uncertainty value on \acrshort{IID} training data points while always returning $1.0$ on \acrshort{OOD} data points, confirming the hypothesis that our Density-Softmax is a better distance-aware method. 

\begin{figure}[ht!]
\vspace{-0.1in}
\begin{center}
  \includegraphics[width=1.0\linewidth]{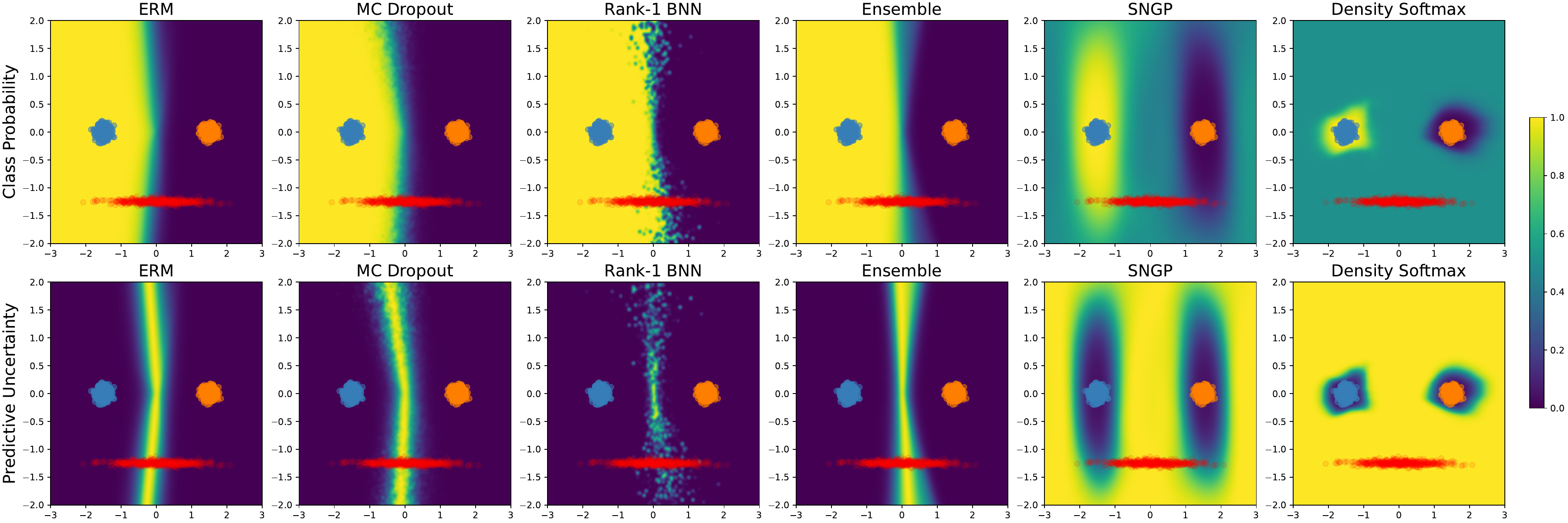}
\end{center}
    \vspace{-0.2in}
  \caption{The class probability $p(y|x)$ and predictive uncertainty $u(x) = 1 - 2 \cdot \left | p(y|x) - 0.5 \right |$ surface between different approaches on the two ovals 2D classification dataset.}
\label{fig:2doval}
\vspace{-0.4in}
\end{figure}

\subsubsection{Benchmark result details}\label{apd:results_benchmark}
\begin{table*}[t!]
\begin{minipage}{1.0\textwidth}
\caption{Detailed table of Tab.~\ref{tab:cifar10_results}.}
\label{tab:cifar10_results+}
\centering
\scalebox{0.78}{
\begin{tabular}{lccccccccccc}
\toprule
\textbf{Method} & \textbf{NLL($\downarrow$)} & \textbf{Acc($\uparrow$)} & \textbf{ECE($\downarrow$)} & \textbf{cNLL($\downarrow$)} & \textbf{cAcc($\uparrow$)} & \textbf{cECE($\downarrow$)} & \textbf{oNLL($\downarrow$)} & \textbf{oAcc($\uparrow$)} & \textbf{oECE($\downarrow$)} & \textbf{\#Params($\downarrow$)} & \textbf{Latency($\downarrow$)}\\
\midrule
Deterministic~ERM & 0.159 & 96.0 & 0.023 & 1.05 & 76.1 & 0.153 & 0.40 & 89.9 & 0.064 & \textbf{36.50M} & \textbf{518.12}\\
MC Dropout & 0.145 & 96.1 & 0.019 & 1.27 & 70.0 & 0.167 & 0.39 & 89.9 & 0.058 & 36.50M & 4,319.01\\
MFVI BNN & 0.211 & 94.7 & 0.029 & 1.46 & 71.3 & 0.181 & 0.49 & 88.1 & 0.070 & 72.96M & 809.01\\
Rank-1 BNN & 0.128 & 96.3 & \textbf{0.008} & 0.84 & 76.7 & 0.080 & 0.32 & 90.4 & 0.033 & 36.65M & 1,427.67\\
Posterior~Net & 0.360  & 93.1  & 0.112 & 1.06 & 75.2 & 0.139 & 0.42 & 87.9 & 0.053 & 36.60M & 1,162.48 \\
Heteroscedastic & 0.156 & 96.0 & 0.023 & 1.05 & 76.1 & 0.154 & 0.38 & 90.1 & 0.056 & 36.54M & 560.43\\
SNGP & 0.134 & 96.0 & 0.007 & 0.74 & 78.5 & 0.078 & 0.43 & 89.7 & 0.064 & 37.50M & 916.26\\
MIMO & 0.123 & 96.4 & 0.010 & 0.93 & 76.6 & 0.112 & 0.35 & 90.1 & 0.037 & 36.51M & 701.66\\
DUQ & 0.239 & 94.7 & 0.034 & 1.35 & 71.6 & 0.183 & 0.49 & 87.9 & 0.068 & 40.61M & 1,013.50\\
DDU (w/o TS) & 0.159 & 96.0 & 0.024 & 1.06 & 76.0 & 0.153 & 0.39 & 89.8 & 0.063 & 37.60M & 954.31\\
DUE & 0.145 & 95.6 & 0.007 & 0.84 & 77.8 & 0.079 & 0.46 & 89.2 & 0.066 & 37.50M & 916.26\\
NatPN & 0.242  & 92.8  & 0.041 & 0.89 & 73.9 & 0.121 & 0.46 & 86.3 & 0.049 & 36.58M & 601.35\\
BatchEnsemble & 0.136 & 96.3 & 0.018 & 0.97 & 77.8 & 0.124 & 0.35 & 90.6 & 0.048 & 36.58M & 1,498.01\\
Deep Ensembles & \textbf{0.114} & \textbf{96.6} & 0.010 & 0.81 & 77.9 & 0.087 & 0.28 & \textbf{92.2} & 0.025 & 145.99M &  1,520.34\\
\textbf{Density-Softmax} & 0.137 & 96.0 & 0.010 & \textbf{0.68} & \textbf{79.2} & \textbf{0.060} & \textbf{0.26} & 91.6 & \textbf{0.016} &  \color{blue}{36.58M} & \color{blue}{\textbf{520.53}}\\
\bottomrule
\end{tabular}}
\end{minipage}

\begin{minipage}{1.0\textwidth}
\caption{Detailed table of Tab.~\ref{tab:cifar100_results}.} 
\label{tab:cifar100_results+}
\centering
\scalebox{0.8}{
\begin{tabular}{lcccccccccc}
\toprule
\textbf{Method} & \textbf{NLL($\downarrow$)} & \textbf{Acc($\uparrow$)} & \textbf{ECE($\downarrow$)} & \textbf{cNLL($\downarrow$)} & \textbf{cAcc($\uparrow$)} & \textbf{cECE($\downarrow$)} & \textbf{AUPR-S($\uparrow$)} & \textbf{AUPR-C($\uparrow$)} & \textbf{\#Params($\downarrow$)} & \textbf{Latency($\downarrow$)}\\
\midrule
Deterministic~ERM & 0.875 & 79.8 & 0.086 & 2.70 & 51.4 & 0.239 & 0.882 & 0.745 & \textbf{36.55M} & \textbf{521.15}\\
MC Dropout & 0.785 & 80.7 & 0.049 & 2.73 & 46.2 & 0.207 & 0.832 & 0.757 & 36.55M & 4,339.03\\
MFVI BNN & 0.944 & 77.8 & 0.097 & 3.18 & 48.2 & 0.271 & 0.882 & 0.748 & 73.07M & 818.31\\
Rank-1 BNN  & 0.692 & 81.3 & \textbf{0.018} & 2.24 & 53.8 & 0.117 & 0.884 & 0.797 & 36.71M & 1,448.90\\
Posterior~Net & 2.021 & 77.3 & 0.391 & 3.12 & 48.3 & 0.281 & 0.880 & 0.760 & 36.60M & 11,243.84\\
Heteroscedastic & 0.833 & 80.2 & 0.059 & 2.40 & 52.1 & 0.177 & 0.881 & 0.752 & 37.00M & 568.17\\
SNGP & 0.805 & 80.2 & 0.020 & 2.02 & 54.6 & 0.092 & \textbf{0.923} & 0.801 & 37.50M & 926.99\\
MIMO & 0.690 & 82.0 & 0.022 & 2.28 & 53.7 & 0.129 & 0.885 & 0.760 & 36.68M & 718.11\\
DUQ & 0.980 & 78.5 & 0.119 & 2.84 & 50.4 & 0.281 & 0.878 & 0.732 & 77.58M & 1,034.66\\
DDU (w/o TS) & 0.877 & 79.7 & 0.086 & 2.70 & 51.3 & 0.240 & 0.890 & 0.797 & 37.60M & 959.25\\
DUE & 0.902 & 79.1 & 0.038 & 2.32 & 53.5 & 0.127 & 0.897 & 0.787 & 37.50M & 926.99\\
NatPN & 1.249 & 76.9 & 0.091 & 2.97 & 48.0 & 0.265 & 0.875 & 0.768 & 36.64M & 613.44\\
BatchEnsemble & 0.690 & 81.9 & 0.027 & 2.56 & 53.1 & 0.149 & 0.870 & 0.757 & 36.63M & 1,568.77\\
Deep Ensembles & \textbf{0.666} & \textbf{82.7} & 0.021 & 2.27 & 54.1 & 0.138 & 0.888 & 0.780 & 146.22M & 1,569.23\\
\textbf{Density-Softmax} & 0.780 & 80.8 & 0.038 & \textbf{1.96} & \textbf{54.7} & \textbf{0.089} & 0.910 & \textbf{0.804} & \color{blue}{36.64M} & \color{blue}{\textbf{522.94}}\\
\bottomrule
\end{tabular}}
\end{minipage}

\begin{minipage}{1.0\textwidth}
\caption{Detailed table of Tab.~\ref{tab:imagenet_results}.}
\label{tab:imagenet_results+}
\centering
\scalebox{0.8}{
\begin{tabular}{lcccccccc}
\toprule
\textbf{Method} & \textbf{NLL($\downarrow$)} & \textbf{Acc($\uparrow$)} & \textbf{ECE($\downarrow$)} & \textbf{cNLL($\downarrow$)} & \textbf{cAcc($\uparrow$)} & \textbf{cECE($\downarrow$)} & \textbf{\#Params($\downarrow$)} & \textbf{Latency($\downarrow$)}\\
\midrule
Deterministic~ERM & 0.939 & 76.2 & 0.032 & 3.21 & 40.5 & 0.103 & \textbf{25.61M} & \textbf{299.81}\\
MC Dropout & 0.919 & 76.6 & 0.026 & 2.96 & 42.4 & 0.046 & 25.61M & 601.15\\
Rank-1 BNN & 0.886 & 77.3 & 0.017 & 2.95 & 42.9 & 0.054 & 26.35M & 690.14\\
Heteroscedastic & 0.898 & 77.5 & 0.033 & 3.20 & 42.4 & 0.111 & 58.39M & 337.50\\
SNGP & 0.931  & 76.1  & \textbf{0.013} & 3.03 & 41.1 & 0.045 & 26.60M & 606.11\\
MIMO & 0.887  & 77.5  & 0.037 & 3.03 & 43.3 & 0.106 & 27.67M & 367.17\\
BatchEnsemble & 0.922 & 76.8 & 0.037 & 3.09 & 41.9 & 0.089 & 25.82M & 696.81\\
Deep Ensembles & \textbf{0.857} & \textbf{77.9} & 0.017 & 2.82 & \textbf{44.9} & 0.047 & 102.44M& 701.34\\
\textbf{Density-Softmax} & 0.885 & 77.5 & 0.019  & \textbf{2.81} & 44.6 & \textbf{0.042} & \color{blue}{25.88M} & \color{blue}{\textbf{299.90}}\\
\bottomrule
\end{tabular}}
\end{minipage}
\end{table*}

\begin{figure}[ht!]
\vspace{-0.1in}
\begin{center}
  \includegraphics[width=1.0\linewidth]{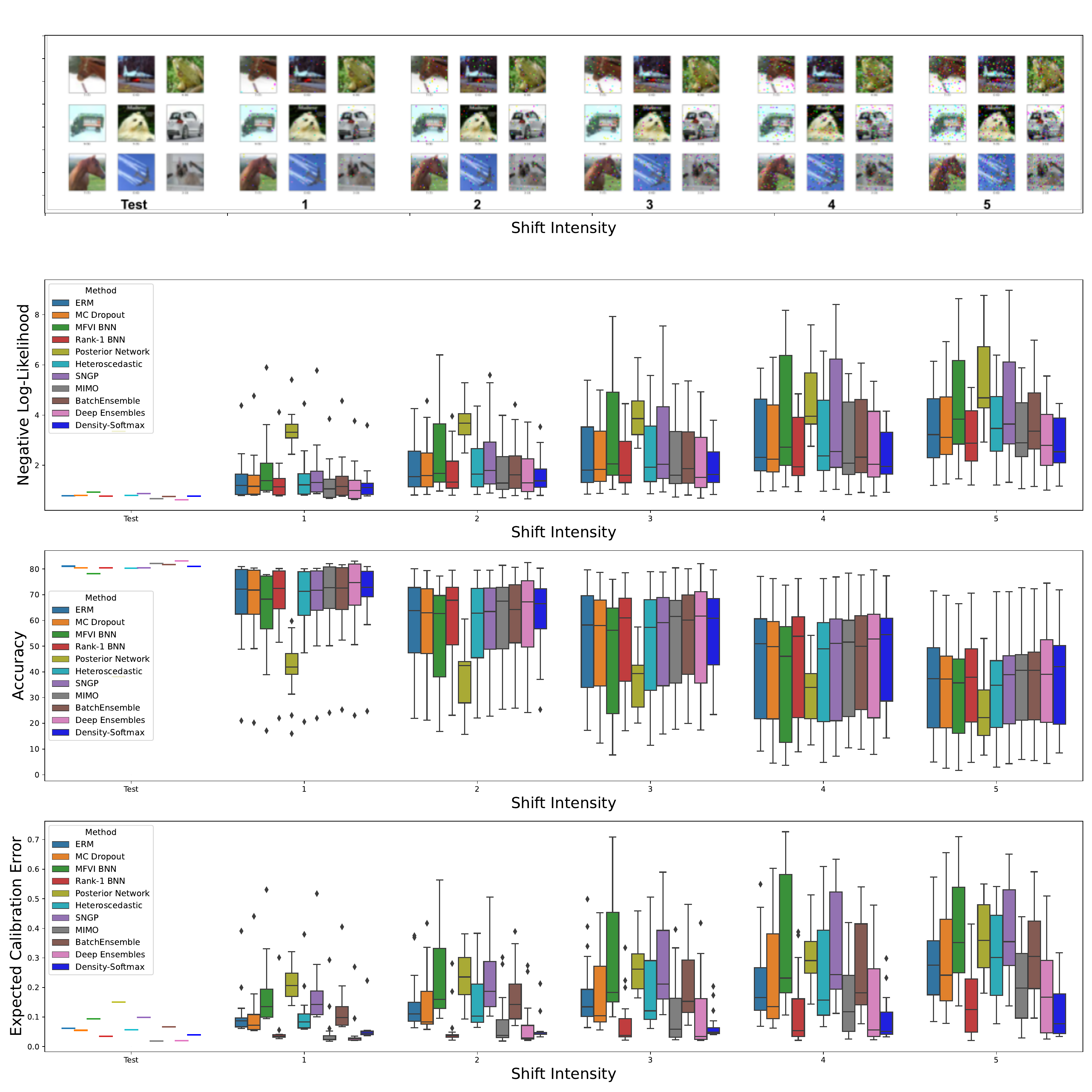}
\end{center}
\vspace{-0.2in}
\caption{A corruption example by shift intensities and comparison under the distributional shift of negative log-likelihood, accuracy, and ECE under all types of corruptions in CIFAR-100-C. For each method, we show the mean on the test set and summarize the results on each shift intensity with a box plot. Each box shows the quartiles summarizing the results across all 17 types of shift while the error bars indicate the min and max across different shift types. Our Density-Softmax achieves the highest accuracy, lowest \acrshort{NLL} and \acrshort{ECE} with a low variance across different shift intensities.} 
\vspace{-0.1in}
\label{fig:cifar100-skew}
\end{figure}
To compare the uncertainty and robustness between methods in more detail, we visualize Figure~\ref{fig:cifar100-skew}, the box plots of the \acrshort{NLL}, accuracy, and \acrshort{ECE} across corruptions under distributional shifts level in CIFAR-100 based dataset from Table~\ref{tab:cifar100_results}. Overall, besides achieving the highest accuracy, lowest \acrshort{NLL} and \acrshort{ECE} on average, our Density-Softmax also always achieves the best accuracy, \acrshort{NLL}, and \acrshort{ECE} performance across different shift intensities in Figure~\ref{fig:cifar100-skew}. For each intensity, our results also show a low variance, implying the stability of our algorithm across different corruption types. More importantly, compared to the two second-best methods Deep Ensembles and Rank-1~\acrshort{BNN}, the gap between our methods increases when the level shift increases, showing our Density-Softmax is more robust than theirs under distributional shifts.

Figure~\ref{fig:cifar100-skew} however only shows the statistics across different shift methods. Therefore, we next analyze in detail the above results over 10 different random seeds. Specifically, we train each model with a specific random seed and then evaluate the result. We repeatedly do this evaluation 10 times with 10 different seed values. Finally, collect 10 of these results together and visualize the mean and standard deviation by the error bar for each shift intensity level. Figure~\ref{fig:10seeds} shows these results in the CIFAR-10-C setting from Table~\ref{tab:cifar10_results}, we observe that the result of our model and other baselines have a small variance, showing the consistency and stability across different random seeds.
\begin{figure}[ht!]
\begin{center}
  \includegraphics[width=1.0\linewidth]{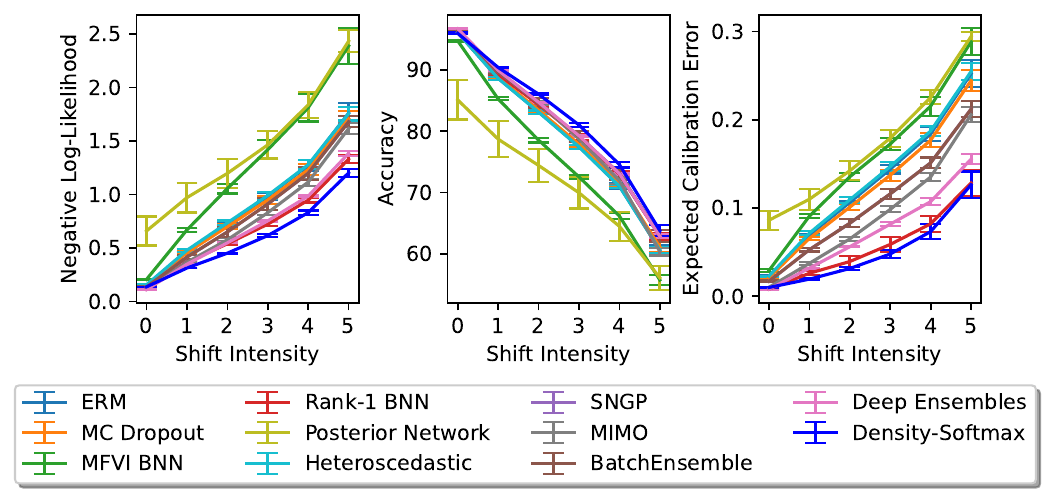}
\end{center}
\vspace{-0.2in}
\caption{Benchmark performance on CIFAR-10-C with Wide~Resnet-28-10 over 10 different random seeds. We plot NLL, accuracy, and ECE for varying corruption intensities; each result is the mean value over 10 runs and 15 corruption types. The error bars represent a fraction of the standard deviation across corruption types. Our method achieves competitive results with \acrshort{SOTA} with low variance across different shift intensities.} 
\label{fig:10seeds}
\end{figure}

Similarly, Figure~\ref{fig:computational_bar} is the comparison across 10 running with 10 different random seeds in terms of model storage and inference cost from Table~\ref{tab:cifar10_results}. There is no variance in storage comparison since the model size is fixed and not affected by a random seed. Meanwhile, the latency variances across different seeds are minor since narrow error bars are in the inference cost comparison bar chart. And this once again, confirms that the results are consistent and stable, our Density-Softmax is the fastest model with almost similar latency to a single Deterministic~\acrshort{ERM}.
\begin{figure}[ht!]
\begin{center}
  \includegraphics[width=1.0\linewidth]{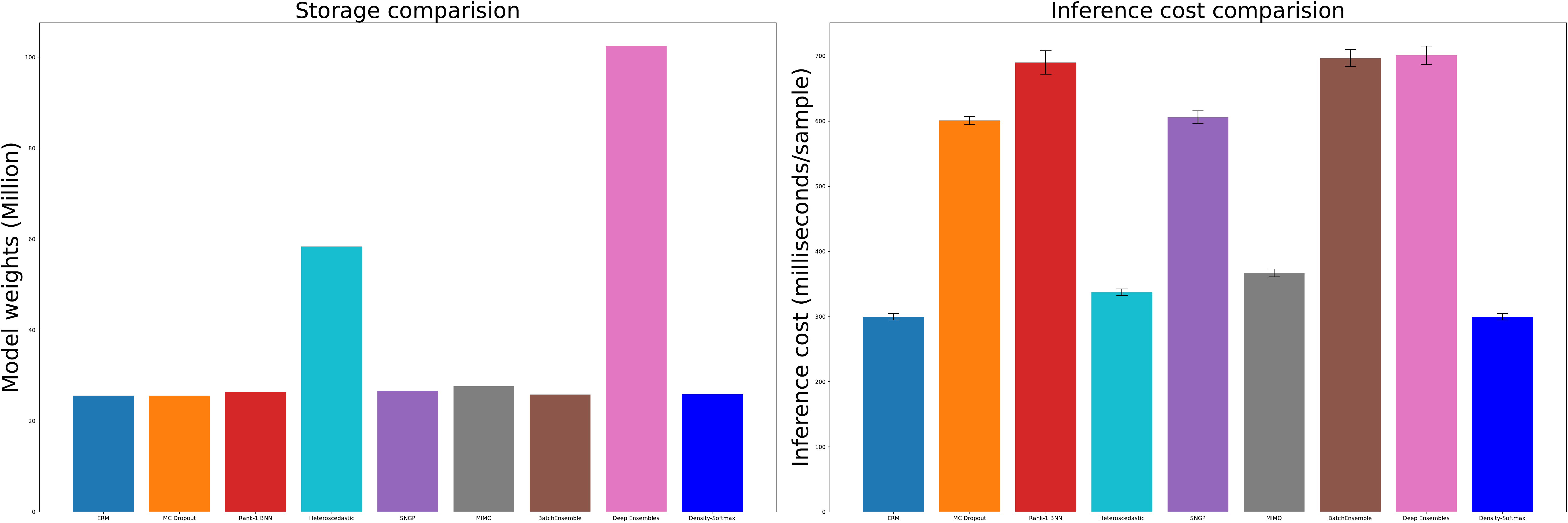}
\end{center}
\vspace{-0.2in}
\caption{Comparison in model storage requirement (Million weights) and inference cost (milliseconds per sample) with error bars across 10 seeds of Resnet-50 on ImageNet dataset with NVIDIA RTX~A5000. Our Density-Softmax achieves almost the same weight and inference speed with a single Deterministic~\acrshort{ERM} model, as a result, outperforming other baselines in computational efficiency.} 
\label{fig:computational_bar}
\end{figure}

\subsubsection{Calibration details}\label{apd:results_calib}
\begin{figure}[ht!]
\begin{center}
  \includegraphics[width=1.0\linewidth]{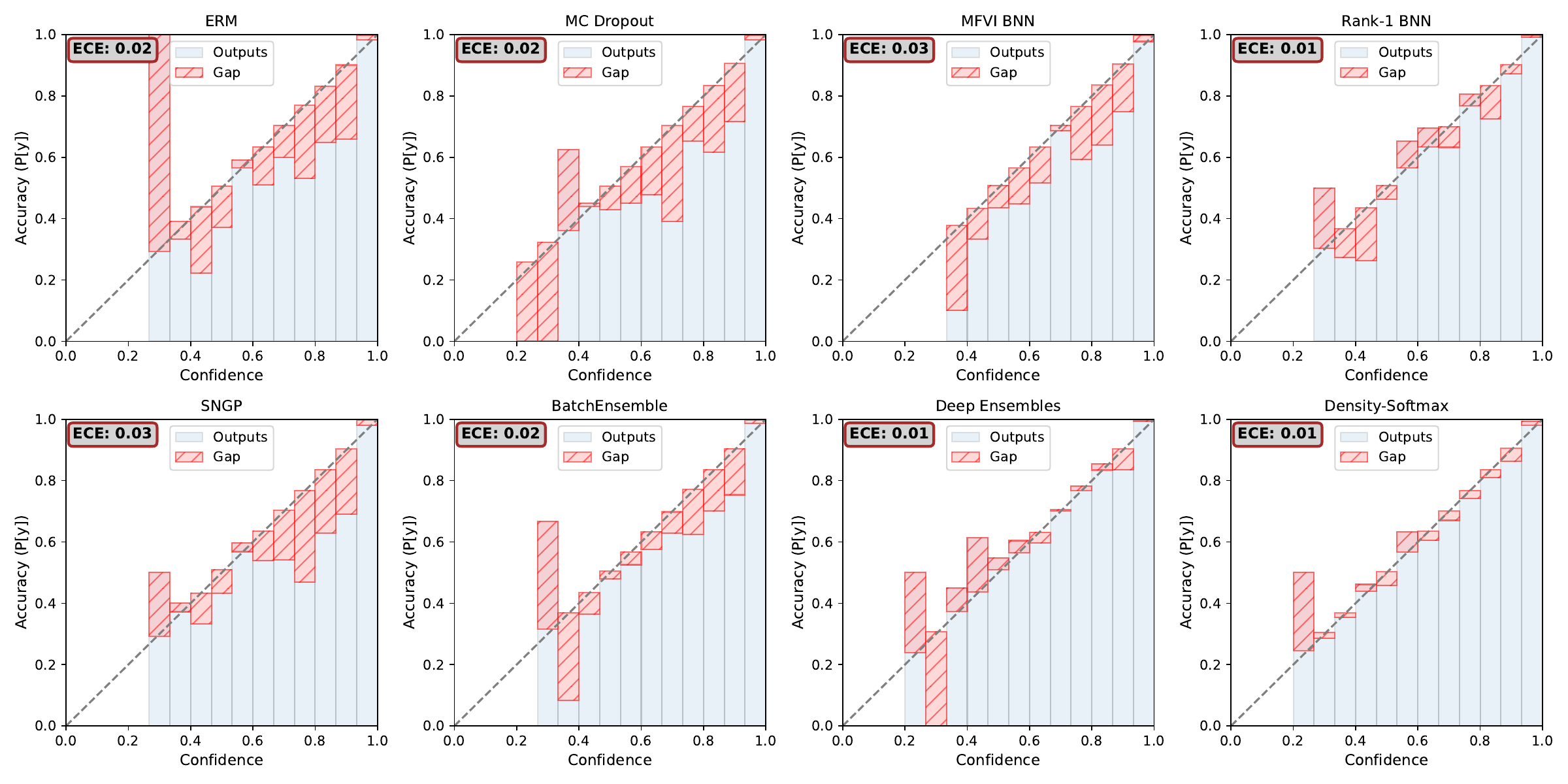}
\end{center}
  \caption{Reliability diagram of Density-Softmax versus different approaches, trained on CIFAR-10, test on \acrshort{IID} CIFAR-10. Density-Softmax has a competitive calibration result with \acrshort{SOTA} methods.}
\label{fig:apd:real_iid_ece_all}
\end{figure}

\begin{figure}[ht!]
\begin{center}
  \includegraphics[width=1.0\linewidth]{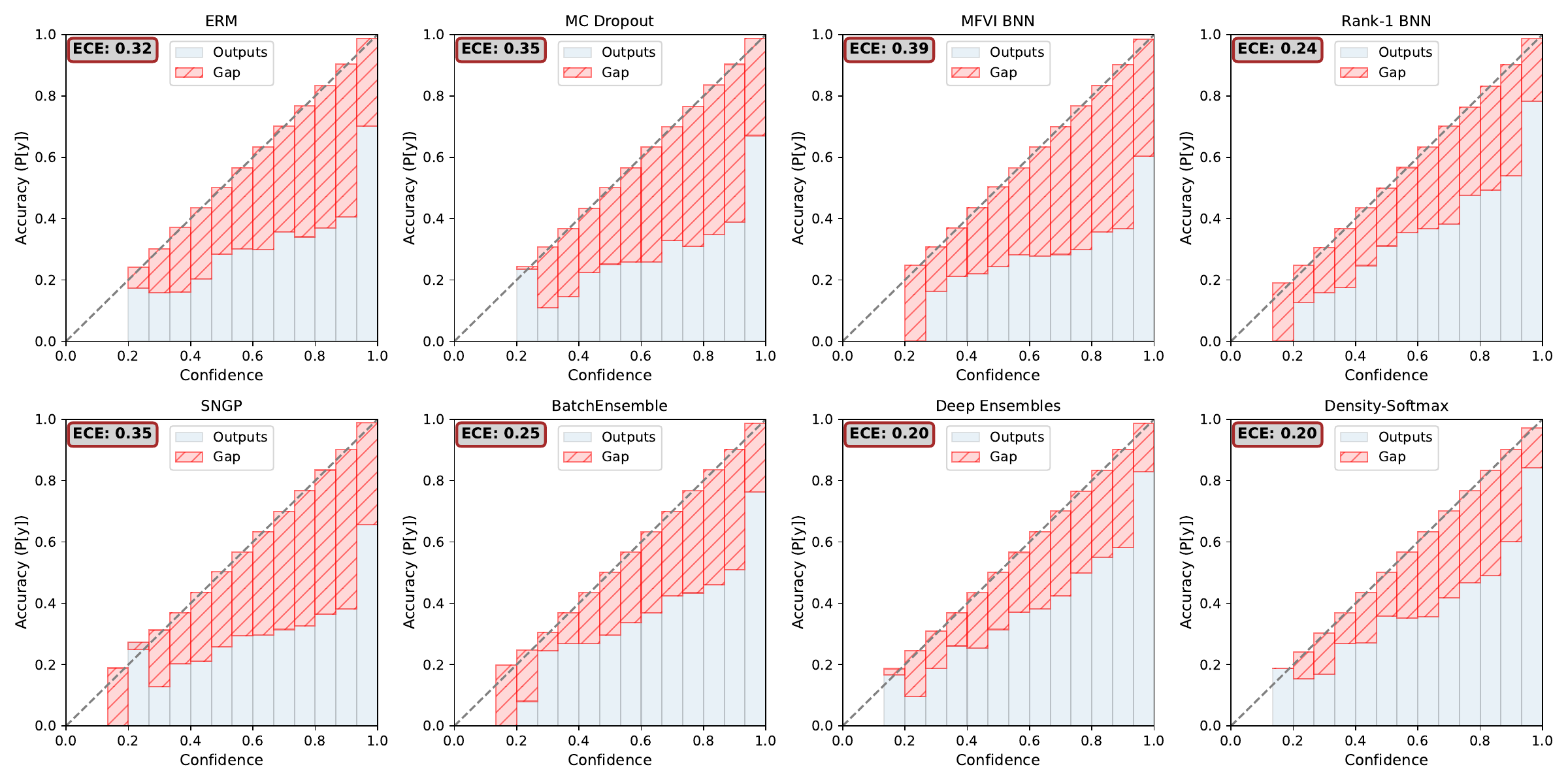}
\end{center}
  \caption{Reliability diagram of Density-Softmax versus different approaches, trained on CIFAR-10, test on \acrshort{OOD} CIFAR-10-C set with "frosted glass blur" skew and "3" intensity. Density-Softmax has a competitive calibration result with \acrshort{SOTA} methods.}
\label{fig:apd:real_out_ece_all}
\end{figure}

To understand how models calibrate in more detail, we visualize the reliability diagrams to test the model's confidence across \acrshort{IID} and \acrshort{OOD} settings. For \acrshort{IID} setting in CIFAR-10, Figure~\ref{fig:apd:real_iid_ece_all} illustrates our Density-Softmax is one of the best-calibrated models with a low \acrshort{ECE}. On the one hand, compared to Deterministic~\acrshort{ERM}, \acrshort{MC}~Dropout, BatchEnsemble, and Deep Ensemble, our model is less under-confidence than them in the prediction that has lower than about $0.4$ confidence. On the other hand, compared to Deterministic~\acrshort{ERM}, \acrshort{MC}~Dropout, \acrshort{MFVI}~\acrshort{BNN}, \acrshort{SNGP}, and BatchEnsemble, our model is less over-confident than them in the prediction that has higher than about $0.4$ confidence. As a result, accompanying Rank-1~\acrshort{BNN} and Deep Ensembles, our model achieves the best calibration performance on \acrshort{IID} data.

More importantly, we find that Density-Softmax is calibrated better than other methods on \acrshort{OOD} data in Figure~\ref{fig:apd:real_out_ece_all} and Figure~\ref{fig:apd:real_ood_ece_all}. In particular, Figure~\ref{fig:apd:real_out_ece_all} shows our model and Deep Ensembles achieves the lowest \acrshort{ECE} and less over-confident than others in a particular CIFAR-10-C test set. Similarly, in real-world distributional shifts CIFAR-10.1 (v6), our model even achieves the lowest \acrshort{ECE} with $0.01$, outperforms the \acrshort{SOTA} Deep Ensembles. These results once again confirm the hypothesis that our model is one of the best reliable models and especially robust under distributional shifts.

\begin{figure}[ht!]
\begin{center}
  \includegraphics[width=1.0\linewidth]{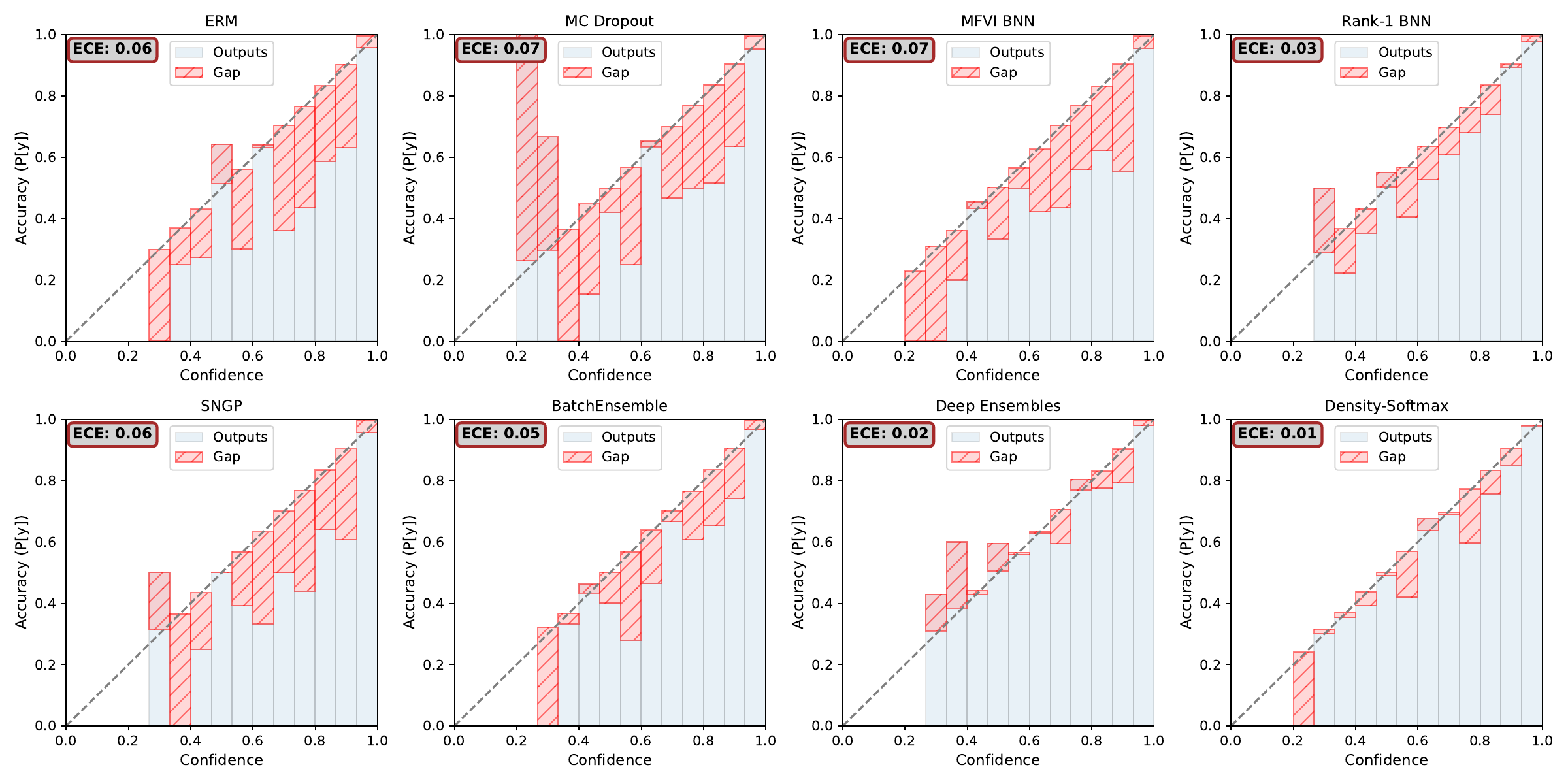}
\end{center}
  \caption{Reliability diagram of Density-Softmax versus different approaches, trained on CIFAR-10, test on real-world shifted \acrshort{OOD} CIFAR-10.1 (v6). Density-Softmax is better-calibrated than other methods.}
\label{fig:apd:real_ood_ece_all}
\end{figure}

\textbf{Miss-classified Expected Calibration Error.} To evaluate the calibration quality in more detail, we continue to make a comparison in terms of Miss-classified \acrshort{ECE}~\citep{chen2022mECE}. This measurement is a specific case of \acrshort{ECE} in Eq.~\ref{eq:ece}. It is different by covering only the miss-classified samples. In this case, the lower m\acrshort{ECE}, the more honest of show uncertainty if the model makes wrong predictions. Fig.~\ref{fig:apd:mece} illustrates the box plots of m\acrshort{ECE} across shift intensities. It confirms that Density-Softmax achieves the best performance with the lowest m\acrshort{ECE}, showing the ability to say "I don't know" before it makes the wrong predictions.

\begin{figure}[t!]
\begin{center}
  \includegraphics[width=1.0\linewidth]{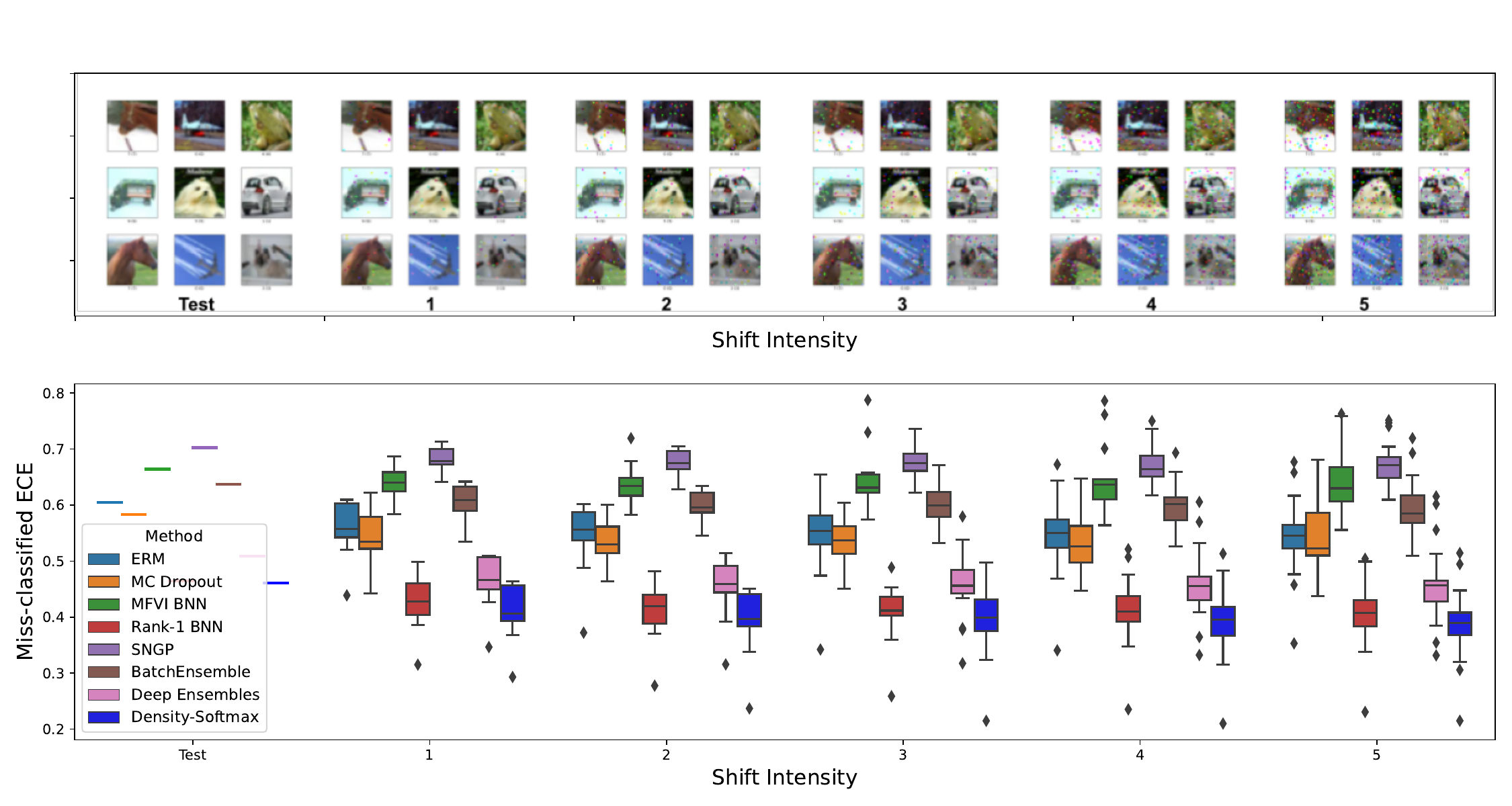}
\end{center}
\caption{A corruption example by shift intensities and comparison under the distributional shift of miss-classified ECE under all types of corruptions in CIFAR-100-C (setting is similar to Figure~\ref{fig:cifar100-skew}). Our Density-Softmax achieves the lowest miss-classified ECE with low variance across different shift intensities.} 
\label{fig:apd:mece}
\end{figure}

\subsubsection{Predictive entropy details}\label{apd:results_entropy}
\begin{figure}[ht!]
\begin{center}
  \includegraphics[width=1.0\linewidth]{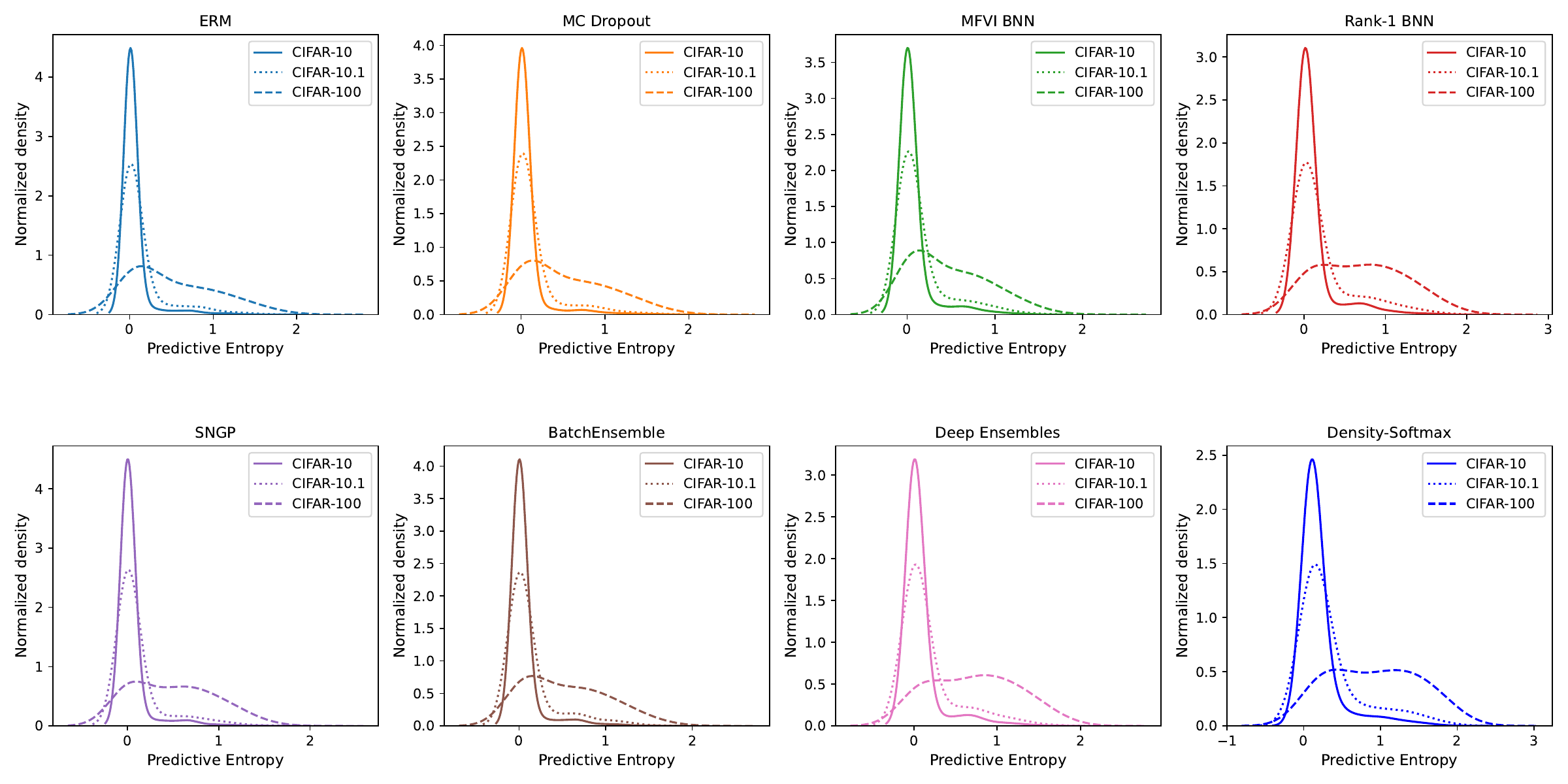}
\end{center}
  \caption{Histogram (bandwidth $=0.5$) with density plot details from Figure~\ref{fig:main}~(a) of predictive entropy for each method on \acrshort{IID} testing data CIFAR-10 (solid lines), covariate shift with \acrshort{OOD} CIFAR-10.1 (v6) (dotted lines), and semantic shift with completely different \acrshort{OOD} CIFAR-100 (dashed lines). Our Density-Softmax has a low predictive entropy value on \acrshort{IID} while achieves the highest entropy value on \acrshort{OOD} data.}
\label{fig:apd:results_entropy}
\end{figure}
Fig.~\ref{fig:main}~(a) has shown our trained model achieves the highest entropy value with a high density. Since this is the semantic shift, the highest entropy value indicates that our model is one of the best \acrshort{OOD} detection models. Yet, this is only a necessary condition for a high-quality uncertainty estimation model because a pessimistic model could also achieve this performance. Therefore, to prove that Density-Softmax is not always under-confidence, we plot Fig.~\ref{fig:apd:results_entropy}. We observe that Density-Softmax has a low predictive entropy with a high density on \acrshort{IID} testing data, proving that our model is not under-confidence on \acrshort{IID} data. Combined with the under-confidence on \acrshort{OOD} data results, these show our Density-Softmax is a reliable model in terms of uncertainty estimation.

\end{document}